\theoremstyle{definition}
\newtheorem{definition}{Definition}[section]
\newtheorem*{definition*}{Definition*}
\newtheorem{lemma}{Lemma}
\newtheorem*{remark*}{Remark}
\newcommand{\AutoAdjust}[3]{\mathchoice{ \left #1 #2  \right #3}{#1 #2 #3}{#1 #2 #3}{#1 #2 #3} }
\newcommand{\Xcomment}[1]{{}}
\newcommand{\InBrackets}[1]{\AutoAdjust{[}{#1}{]}}
\newcommand{\Prx}[2][]{\operatorname{\mathbf{Pr}}_{#1}\InBrackets{#2}}
\newcommand{\E}{\mathbf{E}}
\DeclareMathOperator*{\argmax}{arg\,max}
\newcommand{\noaccents}[1]{#1}
\newcommand{\newagentvar}[3][\noaccents]{%
\expandafter\newcommand\expandafter{\csname #2\endcsname}{#1{#3}}%
\expandafter\newcommand\expandafter{\csname #2s\endcsname}{#1{\boldsymbol{#3}}}%
\expandafter\newcommand\expandafter{\csname #2smi\endcsname}[1][i]{#1{\boldsymbol{#3}}_{-##1}}%
\expandafter\newcommand\expandafter{\csname #2i\endcsname}[1][i]{#1{#3}_{##1}}%
\expandafter\newcommand\expandafter{\csname #2ith\endcsname}[1][i]{#1{#3}_{(##1)}}%
}
\newcommand{\newvecagentvar}[3][\noaccents]{%
\expandafter\newcommand\expandafter{\csname #2\endcsname}{#1{\boldsymbol{#3}}}%
\expandafter\newcommand\expandafter{\csname #2s\endcsname}{#1{\boldsymbol{#3}}}%
\expandafter\newcommand\expandafter{\csname #2smi\endcsname}[1][i]{#1{\boldsymbol{#3}}_{-##1}}%
\expandafter\newcommand\expandafter{\csname #2i\endcsname}[1][i]{#1{\boldsymbol{#3}}_{##1}}%
\expandafter\newcommand\expandafter{\csname #2ith\endcsname}[1][i]{#1{#3}_{(##1)}}%
}
\newcommand{\Uniform}{\mathrm{Uniform}}
\definecolor{BLACK}{named}{black}
\newcommand{\OMLP}{{\color{black}\noindent occupancy-measure LP}}
\newcommand{\ContextualOccupancyIndex}[0]{{\color{black}\noindent Contextual Occupancy Index}}
\newcommand{\ContextualBudgetBandit}[0]{{\color{black}\noindent Contextual Budget Bandit}}
\newcommand{\CBB}[0]{{\color{black}\noindent \text{CBB}}}
\newcommand{\ContextSpecificBudgetConstraint}[0]{{\color{black}\noindent Contextual Budget Constraint}}
\newcommand{\policy}{{\color{black}\noindent\texttt{Policy}}}
\newcommand{\ContextualOccupancySoftBudgetPolicy}[0]{{\color{black}\noindent Contextual Occupancy Soft Budget Policy}}
\newcommand{\Reward}{{\text{$\mathcal R$eward}}}
\newcommand{\Fairness}{{\text{$\mathcal F\!$\small{airness}}}}
\newcommand{\FlexibleBudgetAllocCWIndexPolicy}{\text{Flexible-Budget-Allocation-Contextual-Occupancy-Index Policy}}
\newcommand{\ContextualOccupancyIndexPolicy}[0]{{\color{black}\noindent COcc}}
\newcommand{\BranchAndBound}[0]{{\color{black}\noindent Branch And Bound}}
\newcommand{\ZoomingBranch}[0]{{\color{black}\noindent Mitosis}}
\newcommand{\Mitosis}[0]{{\color{black}\noindent Mitosis}}
\newcommand{\Random}[0]{{\color{black}\noindent Random}}
\newcommand{\Greedy}[0]{{\color{black}\noindent Greedy}}
\newcommand{\VanillaWhittle}[0]{{\color{black}\noindent Vanilla Whittle}}
\newcommand{\TreeArm}[0]{{\color{black}\noindent \text{StemArm}}}
\newcommand{\Oracle}[0]{{\color{black}\noindent {\text{$\mathcal O$racle}}}}
\newcommand{\OracleSmall}[0]{{\color{black}\noindent {\text{$\text{$\mathcal O$racle}_\text{small}$}}}}
\newcommand{\LP}[0]{{\color{black}\noindent {\text{$\mathcal L$P}}}}
\newcommand{\frname}{412 Food Rescue}
\title{Contextual Budget Bandit for Food Rescue Volunteer Engagement}
\author{Ariana Tang\textsuperscript{1}, Naveen Raman\textsuperscript{2}, Fei Fang\textsuperscript{2}, Zheyuan Ryan Shi\textsuperscript{3}\\
\textsuperscript{1}University of Chicago, \textsuperscript{2}Carnegie Mellon University, \textsuperscript{3}University of Pittsburgh
}
\begin{document}

\maketitle

\begin{abstract}
Volunteer-based food rescue platforms tackle food waste by matching surplus food to communities in need.
These platforms face the dual problem of maintaining volunteer engagement and maximizing the food rescued. 
Existing algorithms to improve volunteer engagement exacerbate geographical disparities, leaving some communities systematically disadvantaged. 
We address this issue by proposing \ContextualBudgetBandit{}. \ContextualBudgetBandit{} incorporates context-dependent budget allocation in restless multi-armed bandits, a model of decision-making which allows for stateful arms. 
By doing so, we can allocate higher budgets to communities with lower match rates, thereby alleviating geographical disparities. 
To tackle this problem, we develop an empirically fast heuristic algorithm. 
Because the heuristic algorithm can achieve a poor approximation when active volunteers are scarce, we design the \ZoomingBranch{} algorithm, which is guaranteed to compute the optimal budget allocation.
Empirically, we demonstrate that our algorithms outperform baselines on both synthetic and real-world food rescue datasets, and show how our algorithm achieves geographical fairness in food rescue. 
\end{abstract}
\begin{figure}[t]
    \centering
    \includegraphics[width=0.7\linewidth]{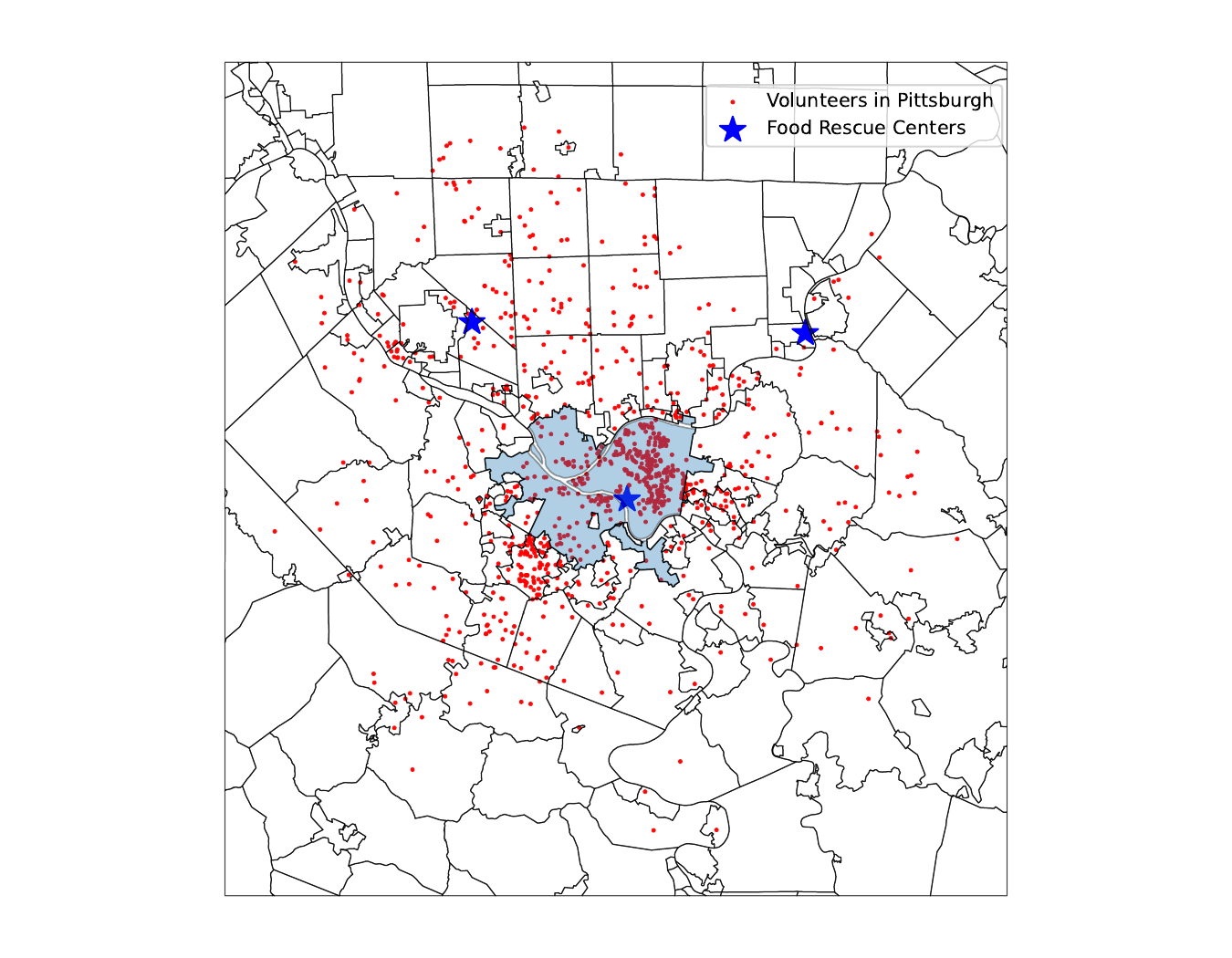}
    \caption{The picture shows volunteers and donation regions in real food rescue database. The central "popular" downtown region has 720 nearby volunteers clustered with an average distance of 7.7 km to their rescue spots. In contrast, the least favored northwestern region has only 85 volunteers on average and a longer average distance of 10.6 km.}
    \label{fig:ai4sg}
\end{figure}

\section{Introduction}

The world wastes up to 40\% of our food globally, translating to over 1.3 billion tons annually, while 1 in 7 people struggle to secure enough food every day~\citep{coleman2018household,conrad2018relationship}.
With their appearance in over 100 cities worldwide, food rescue platforms (FRP) receive safe, edible food donations from businesses like restaurants (``donors'') and distribute them to organizations serving low-resource communities (``recipients''). 
Our partner organization, \frname{},
is a large FRP with operations in over 25 different cities across the US. 
FRPs are able to scale due to volunteers, who transport food from donors to recipients. 
Essentially, volunteers claim ``rescues'' from an FRP's mobile app. 
After claiming the rescue, the app instructs them where to pick up and drop off the donation. 

The inclusion of volunteers in FRPs brings about inherent uncertainty due to changing volunteer behavior. 
Volunteer engagement is critical to FRP success, so FRPs have an urgent need to engage their volunteers while maximizing the amount of food rescued. 
A few studies have developed algorithms to improve volunteer engagement on FRPs by dynamically notifying volunteers about rescue trips~\citep{shi2021recommender,shi2024predicting,raman2024global}. However ~\citet{shi2021recommender} showed that such algorithms can backfire because they result in severe geographical disparity in food rescue outcomes (Figure~\ref{fig:ai4sg}).
In some regions such as downtown, the algorithm enjoyed almost 90\% completion rate, while in some outer suburbs, the completion rate dropped to 40\%. 

In our work, we study how to maintain volunteer engagement while combatting geographical disparities. 
The challenge is that volunteer behaviors evolve over time in response to notification patterns. 
To tackle this issue, we model food rescue volunteer engagement as a restless multi-armed bandit (RMAB) problem, a common model for online resource allocation~\citep{rmab_public_health,raman2024global}. 
We extend this model to incorporate geographical disparities with a \textit{context}, which corresponds to geographic information for each rescue trip. 
We then set notification quotas for different regions  so certain regions with scarce volunteers have higher budgets. 
Such an approach allows for flexibility in notifications without sacrificing overall performance. 

We make the following contributions: (1) We propose the \textit{\ContextualBudgetBandit{}} problem, which extends RMABs to situations with
context-dependent budget allocations. Such a problem is motivated by applications in food rescue, but can also model problems in a variety of domains such as digital agriculture and peer review (see Section~\ref{sec:other_areas}).
(2) We develop the \ContextualOccupancyIndexPolicy{}, a fast, empirically approximation algorithm which provides an upper bound to \ContextualBudgetBandit{}. We characterize cases where it fails with a constant factor; (3) We design the \ZoomingBranch{} algorithm which is guaranteed to compute the optimal budget allocation; and (4) We empirically demonstrate that our algorithms improve upon classical algorithm baselines.
 
\section{Related Works}
 
\paragraph{\ContextualBudgetBandit{}}

Contextual information is often present in bandit and can substantially improve decision quality, and has been extensively studied in classical Multi-Armed Bandits~\citep{bouneffouf2020survey-contextualBandit, langford2007-ContextualMAB}. Incorporating context with RMABs is a promising direction, as it enables modeling both individual arm's state dynamics and global contextual influences.
To our knowledge, four main works combine context into RMABs. \citet{Liang_Xu_Taneja_Tambe_Janson_2025-BayesianContextualRMAB}, \citet{mimouni25-DeepQLearningContextualRMAB}, and \citet{context_win} treat context as static side information that affects arms’ transitions and rewards. These studies focus on learning transition parameters online~\citep{Liang_Xu_Taneja_Tambe_Janson_2025-BayesianContextualRMAB} or approximating Whittle indices using deep learning~\citep{mimouni25-DeepQLearningContextualRMAB, context_win}.

The most relevant prior work is \citet{contextual_demand_rmab}, which models context as a global Markov-evolving state and develops a dual-decomposition-based index policy, which is a generalization of Whittle Index Policy. Their approach proves optimal under the assumption that contexts evolve deterministically and periodically. In contrast, we assume context evolves randomly under a Bayesian prior, which generalizes and weakens this assumption. We theoretically demonstrate that under this setting, Whittle index policy loses asymptotic optimality, underscoring the need for careful budget design in contextual RMABs.

All prior contextual RMAB work assumes a fixed activation budget per round, independent of context. In this paper, we allow the budget to vary by context, and show---both theoretically and empirically---that context-aware budget allocation improves performance. We design algorithm for calculating optimal budget allocation.

\paragraph{Fairness} is an increasingly important consideration in both business and non-profit organizations~\cite{Bertsima2012MS_EfficiencyFairnessTradeoff,NikhilGarg_2024_ServiceLevelAgreementFairness}. For RMAB, fairness is typically imposed on individual arms: \cite{Wang_Xiong_Li2024longTermFairnessOnArms} define fairness as requiring a minimum long-term activation fraction for each arm; \cite{Pradeep2022_SoftFairnessRMAB} propose a \textit{soft fairness constraint}, or by setting an upperbound on the number of decision epochs since an arm was last activated~(\cite{Pradeep2022_RMABfairness}). Fairness can also be defined over groups of arms. \cite{killian2023groupFairness} study minimax and max-Nash welfare objectives by imposing fairness on groups of arms, and ~\cite{Verma2024_GroupFairnessRMAB} enforce fairness with respect to the reward outcomes across groups. To the best of our knowledge, although RMABs with \textbf{contextual information} have been previously studied, our work is the first to consider fairness with respect to \textbf{context}.

\subsection{Generalization}\label{sec:other_areas}

While we ground our work in food rescue volunteer engagement, our model and algorithms are applicable to
many domains.

\paragraph{Digital agriculture} 
Agriculture chatbots empower smallholder farmers~\citep{guérin2024reportnsfworkshopsustainable}.
In collaboration with Organization X, we have a chatbot which sends nudges about farming practices to over 50,000 farmers in India, Kenya, and Nigeria.
However, nudges of different topics have different conversion rates. Pest control tips during the pest season address an urgent problem, resulting in high conversion rates. Meanwhile, watering tips are preventive measures, which often have lower conversion rates by the farmers. Thus, one would assign different nudging budgets to different topics of nudges, and model it as a \CBB{}. Each farmer is an arm. At each time step, we have a nudge topic as context, and we decide on a budget of how many farmers to notify and the arm selection of who to notify. Rewards are determined based on farmer's engagement response. 

\paragraph{Peer review}
Journals select reviewers where selection impacts future reviewer availability~\citep{payan2021will}. For a given paper, the goal is to select a subset of reviewers with the relevant expertise. However, submissions differ from one another. For example, submissions that are extra long, that involves heavy theoretical analysis, or that do not study the trendy topics might have lower chance of getting reviewers. Thus, when the editor plans reviewing invitations over time, they would want to send different numbers of invitations to different kinds of submissions, and model it as a \CBB{}. Each potential reviewer is an arm. At each time step, we have a submission type as context, and we decide on a budget of how many potential reviewers to reach out to, and the arm selection of who to reach out to. Rewards are determined based on the reviewers' response.

\paragraph{Email campaigns}

Email marketing platforms often face the challenge of optimizing user engagement while avoiding excessive spam. We model this setting as a \CBB{}, where each user corresponds to an arm, and at each time step, the platform observes contextual information such as campaign features (e.g., seasonal promotions) and user attributes (e.g., location, browsing history, and engagement profile)~\citet{mimouni25-DeepQLearningContextualRMAB}. The platform decides both (i) the budget of how many users to target and (ii) which users to send emails to, balancing personalization with resource constraints.

The reward is based on user engagement outcomes, such as opening an email, clicking on a link, or making a purchase. Since user engagement evolves over time—even when no emails are sent—this setting naturally requires a contextual restless bandit formulation. By leveraging contextual information, the system can dynamically adapt to changing user preferences and campaign effects, allocating larger budgets to contexts with higher expected conversion rates while maintaining long-term engagement.
\section{Preliminary Background}

A Restless Multi-Armed Bandit (RMAB) is defined by:
\begin{equation*}
    \langle N, \mathcal{S}, \mathcal{A},\lbrace r_{i} \rbrace_{i \in [N]}, \lbrace P_{i} \rbrace_{i \in [N]}\rangle.
\end{equation*}
Each arm $i\in [N]:=\{1, 2, \ldots, N\}$ is an independent Markov Decision Process, with state space $\mathcal S_i=\{0, 1\}$ and binary action space $ \mathcal A_i = \{0, 1\}$. Action $0$ corresponds to idling the arm while action $1$ corresponds to pulling the arm. 
The reward function for each arm $r_i:\mathcal S_i\times \mathcal A_i \to \mathbb R$ maps state-action pairs to a reward. $P_i$ is the transition kernel for each arm $i$. The overall system state at time $t$ is $\mathbf{s}^t=(s_1^t, s_2^t,\ldots, s_N^t)$, and the decision maker selects action $\mathbf{a}^t=(a_1^t, a_2^t,\ldots, a_N^t)$ subject to a budget constraint:
$$
\sum_{i\in [N]} a_i^t \le B,\quad \forall\, t=1,2,\ldots,
$$
which limits the number of arms that can be pulled in every time step. The objective is to design a policy that maps the current state $\mathbf{s}^t$ to an action vector $\mathbf{a}^t$ that maximizes the average reward over all arms and over an infinite time horizon.

Solving optimal policy for RMAB is PSPACE-hard~(\citet{RMAB_PSPACE_HARD_PAPA1999}). A widely-adopted approach for tackling the computational complexity inherent in RMABs is the Whittle Index Policy. The \emph{Whittle Index} for each arm~$i$'s state $s_i\in \mathcal S_i$ is $w_{i}(s_{i}) = \min\limits_{w} \{w | Q_{i,w}(s_{i},0) = Q_{i,w}(s_{i},1)\}$, defined using the standard Bellman $Q$-function and $V$ (value) function:
\begin{align*}
    & Q_{i,w}(s_{i},a_{i}) = -w a_{i} + r_{i}(s_{i},a_{i}) + \gamma \sum_{s'} P_{i}[s_{i},a_{i},s'] V_{i,w}(s')\\
    & V_{i,w}(s') = \max\limits_{a} Q_{i,w}(s',a)
\end{align*}
Under the crucial condition of \textbf{indexability}---which requires that the set of states where it is optimal to activate an arm decreases monotonically as $w$ (which can be think of as a penalty for pulling an arm) increases---the Whittle index is well-defined and interpretable as the marginal value of activating an arm.
At each time step, the \emph{Whittle Index Policy} pulls the~$B$ arms with the highest Whittle Indices. In this way, it decouples the multi-armed problem into a collection of single-arm problems. 
The Whittle index policy is asymptotically optimal under regularity conditions as the number of arms goes to infinity ~\citep{Gittins1991RMABBookChapter6}.

\section{\ContextualBudgetBandit{}}
Because traditional methods to maintain volunteer engagement can lead to geographical disparity~\citep{shi2021recommender}, we pursue an intuitive solution where we allocate different notification budget to different regions. 
To do this, we need to augment the standard RMAB model with variability in transition and reward across time, and the flexibility to adjust budget accordingly.
In this section, we will introduce the \ContextualBudgetBandit{} model and multiple algorithms for it.
\subsection{The \ContextualBudgetBandit{} Model}

A \ContextualBudgetBandit{} (\CBB{}) is defined by the tuple
$$\langle N, \mathcal{S}, \mathcal{A}, K, \lbrace r_{i}^{k} \rbrace_{i \in [N], k \in [K]}, \lbrace P_{i}^{k} \rbrace_{i \in [N], k \in [K]}, \mathcal{F}\rangle.$$
Departing from the standard RMAB model, we introduce the $[K] = \{1, 2, \ldots, K\}$ (finite) \textbf{contexts}. A Borel measure $\mathcal{F}$ on $[K]$ specifies the distribution over these contexts, which is known by the decision maker. At each time step, a new context is sampled with respect to $\mathcal F$ and globally applies to all arms. $r_i^k$, $P_i^k, \forall k\in [K]$ are the reward function and the transition probability kernels \emph{specific to context~$k$}. We write reward function $r_i^k$ in the expanded form as $r_i(s, a; k)$ (and $P[s\to s'|a, k]$ for $P_i^k$ respectively). Context and state transitions are \emph{independent}.

A policy $\pi$ for \CBB{} (i) pre-specifies budget allocation $\vec B := (B_k)_{k\in [K]}$ and (ii) maps $(\mathbf s^t, k^t)$ to $\mathbf a^t$. The design objective is to maximize the average expected reward:
\begin{equation*}
    \Reward(\pi):= \lim_{T \to \infty} \frac{1}{T} \sum_{t=1}^T \mathbb{E}_{(\mathbf a, \mathbf s) \sim \pi, k^t\sim \mathcal F} \Bigl[  \sum_{i \in [N]} r_i(s_i^t, a_i^t; k^t)\Bigr].
\end{equation*}

In standard RMAB, there is a budget for the number of arms pulled at each time point. We generalize this budget notion for \CBB{} as the following \ContextSpecificBudgetConstraint{}:
\begin{definition}[\ContextSpecificBudgetConstraint{}]
    \label{def:context_specific_budget_constraint}
    A policy is said to satisfy \emph{\ContextSpecificBudgetConstraint{}} if the number of arms pulled at each time step is constrained by a fix quantity $B_k$ contingent on context (Constraint I), while the expected budget usage is still bounded by $B$ (Constraint II):
    \begin{align*}
        &\sum_{i\in[N]} a_i^t \mathbb{I}(k^t=k) \le B_k, \forall t, k\tag{Constraint I}\\
        &\mathbb E_{k\sim \mathcal F} [B_k] = \sum_{k\in [K]} f_k B_k \le B \tag{Constraint II}.
    \end{align*}
\end{definition}

The vanilla Whittle Index Policy for standard RMAB (henceforth \VanillaWhittle{}) can be directly applied to \CBB{}: it uses a uniform budget for each context $\vec B = (B, \ldots, B)$. It satisfies the \ContextSpecificBudgetConstraint{}. However, the following theorem shows that its performance can be arbitrarily bad.

\begin{restatable}{theorem}{thmvanillawhittle}
\label{thm:vanilla_whittle_canbe_arb_bad}
    For a \CBB{}, denote the \VanillaWhittle{} Policy's reward as $\Reward^\text{VanillaWhittle}$, and the optimal policy that satisfies context-specific budget constraint as $\Reward^\text{CBC-OPT}$. There exists an instance where, 
    \begin{equation*}
        \frac{\Reward^\text{CBC-OPT}}{\Reward^\text{VanillaWhittle}} \to \infty, \qquad \text{as}\,\, N \to \infty.
    \end{equation*}
\end{restatable}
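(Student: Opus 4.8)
The plan is to exhibit a two-context instance in which essentially all reward is concentrated in a rare context, so that the optimal \CBB{} policy can reallocate almost its entire budget to that context, while \VanillaWhittle{}, forced to spend only $B$ per context, captures a vanishing fraction of it. Concretely, I would fix the budget $B$ (taking $B=1$ keeps the arithmetic clean, though any fixed $B$ works) and take $N$ identical arms with trivial dynamics: the transition kernels are the identity, so each arm's state never changes and states play no role. I introduce two contexts, a rare ``valuable'' context $R$ with $f_R = B/N$ and a ``worthless'' context $C$ with $f_C = 1 - B/N$ (valid probabilities for $N \ge B$), and I set $r_i(s,1;R)=1$ for pulling any arm in context $R$ and $r_i(s,a;k)=0$ in every other case. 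This instance is trivially indexable, which matters because indexability is what makes the Whittle index well-defined: in context $R$ an arm is activated iff the penalty $w \le 1$ and in context $C$ iff $w \le 0$, so the activation set shrinks monotonically in $w$, and the Whittle index equals $1$ in context $R$ and $0$ in context $C$.

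To lower-bound the optimum, I would check feasibility of the allocation $B_R = N$, $B_C = 0$: Constraint~I is immediate, and Constraint~II holds since $f_R B_R + f_C B_C = (B/N)\cdot N = B \le B$. Under this allocation the policy pulls all $N$ arms whenever context $R$ appears, collecting reward $N$ on those rounds, so by stationarity of the i.i.d.\ context stream its long-run average reward is $f_R \cdot N = B$. Hence $\Reward^\text{CBC-OPT} \ge B$. To upper-bound the benchmark, the key point is that \VanillaWhittle{} uses $B_k = B$ in every context and therefore pulls at most $B$ arms per round: in context $C$ it collects $0$ regardless of which arms it touches, and in context $R$ it pulls at most $B$ arms each worth reward $1$, for at most $B$ per round (exactly $B$, since those arms carry the unique positive index). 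Thus $\Reward^\text{VanillaWhittle} \le f_R \cdot B = B^2/N$, and combining the two bounds gives $\Reward^\text{CBC-OPT}/\Reward^\text{VanillaWhittle} \ge N/B \to \infty$.

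The only step that needs genuine care is the upper bound on \VanillaWhittle{}: I must rule out the possibility that it recoups value in the common context $C$, for instance by ``banking'' arms in a favorable state during $C$-rounds to cash in when $R$ arrives. The degenerate, stateless construction sidesteps this entirely, since context $C$ can never produce reward and the states are inert; if one instead wanted a genuinely restless instance, this is precisely the step that would demand an argument that $C$-rounds cannot be leveraged. I would also verify that the divergence is robust to how \VanillaWhittle{} breaks ties and to the exact context-aware index computation, which it is, because the lower-bound argument invokes only the per-context budget cap of $B$ and the fact that reward arises solely in context $R$.
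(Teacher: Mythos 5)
Your proposal is correct and follows essentially the same construction as the paper: two contexts where a rare context (probability $\Theta(1/N)$) carries all the meaningful reward, the optimal allocation puts the entire budget mass on that rare context ($B_k = N$ there, $0$ elsewhere), and \VanillaWhittle{} is capped at $B$ pulls per round so it captures only an $O(B/N)$ fraction of that value. The only cosmetic difference is that the paper gives the common context a tiny reward $1/N$ (so \VanillaWhittle{} earns $O(1)$ while OPT earns $N$), whereas you zero it out (so \VanillaWhittle{} earns $B^2/N$ while OPT earns $B$); both yield the unbounded ratio.
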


\begin{proof}

Consider a \CBB{} instance with $N$ stochastically identical arms. For simplicity we assume that the transition probabilities are such that each arm is always active $(s_i = 1)$. 
Let there be two contexts, where context~$1$ occurs with probability $f_1 = 1 - \frac{1}{N}$ and context~$2$ occurs with probability $f_2 = \frac{1}{N}$. For each arm $i$, context~$1$ generates reward $r_i(s_i = 1, a_i = 1; k = 1) = \frac{1}{N}$, context~$2$ generates reward $r_i(s_i = 1, a_i = 1; k = 2) = N$. Suppose budget $B = 1$. 

Consider the policy that leaves all arms idle at context~$1$, and pulls all $N$ arms at context~$2$. The policy is feasible because its budget constraint $\vec B=(0, N)$ satisfies $f_1\times 0 + f_2 \times N= \frac 1N \times N =1=B$. Its average reward is $N$.

For \VanillaWhittle{}, the good context~$2$ that has the high reward only occurs with probability$\frac{1}{N}$. And when it happens, we can only pull and get reward from \textit{one} arm. So its average reward is $(1 - \frac{1}{N})\frac{1}{N} + \frac{1}{N}\times N = O(1)$. As $N\to \infty$, the gap between the two policies goes to infinity.
\end{proof}

\subsection{\ContextualOccupancyIndex{} and the \FlexibleBudgetAllocCWIndexPolicy{}}

Theorem~\ref{thm:vanilla_whittle_canbe_arb_bad} shows that the \VanillaWhittle{} Policy can be arbitrarily bad for \CBB{} compared to the optimal policy that satisfies the \ContextSpecificBudgetConstraint{}---we need new algorithms that can determine the optimal budget allocation $\vec B$ across contexts. In this section, we introduce the class of \FlexibleBudgetAllocCWIndexPolicy{}, which first determines and commit a budget allocation $\vec B$, and then pulls arms according to descending orders of \ContextualOccupancyIndex{}.

First, we introduce the \OMLP{} that is used to compute the \ContextualOccupancyIndex{}.
\begin{definition}
    The \emph{occupancy measure} $\mu$ of a (possibly randomized) policy $\pi$ in \CBB{} is the average visitation probability  to a state-action-context tuple $(s, a; k)$:
    $$
        \mu_i(s, a; k) := \Prx{s_i = s, a_i = a; k}, \forall i\in [N].
    $$
\end{definition}

We can formulate the problem of maximizing the stationary reward of \CBB{} as a linear program (LP) over occupancy measures:
\begin{definition}
\label{def:occ_measure_lp}
    For a given \CBB{} instance, its \emph{\OMLP{}} is
    \begin{align*}
        \max_{\mu} \ & \sum_{i\in [N]} \sum_{k\in [K]} \sum_{s_i, a_i} \mu_i(s_i, a_i; k) r_i(s_i, a_i; k) \tag{\OMLP}\\
        s.t.\  &f_{k'}\left[\sum_{k\in [K]} \sum_{s_i, a_i} P[s_i\to s_i'| a_i, k] \mu_i(s_i, a_i; k)\right]\\
        & \quad \quad \quad = \sum_{a_i}\mu_i(s_i', a_i; k'), \forall k', s_i', i\\
        & \sum_{k\in [K]} \sum_{a_i, s_i}\mu_i(s_i, a_i; k) = 1, \forall i\in [N] \\
        & \sum_{k\in [K]} \sum_i \sum_{s_i}\mu_i(s_i, 1; k)\le B \tag{Relaxed Budget Constraint}\\
        & \mu_i(s_i, a_i, k) \ge 0, \forall i, s_i, a_i, k .
    \end{align*}
\end{definition}

Solving the \OMLP{} induces a policy for \CBB{}, which pulls the arms according to optimal solution variables. We refer to it as the following \ContextualOccupancySoftBudgetPolicy{}:
\begin{definition}
[\ContextualOccupancySoftBudgetPolicy{} (adapted from~\citep{Xiong_2022_asymptotic_optimality})]
    Given the optimal solution $\mu^\star(\cdot, \cdot; k)$  to the \OMLP{}, the \emph{\ContextualOccupancySoftBudgetPolicy{}} $\pi^\text{soft}$ pulls an arm $i$ in state $s_i$ and context with probability $\chi^\star_i(s_i, k)$, where
    $$
        \chi^\star_i(s_i, k) = \frac{\mu_i^\star(s_i, 1; k)}{\mu_i^\star(s_i, 0; k) + \mu_i^\star(s_i, 1; k)}.
    $$
\end{definition}

The \ContextualOccupancySoftBudgetPolicy{} behaves as if it pulls all arms whose \ContextualOccupancyIndex{} defined below, are above a certain threshold.

\begin{definition}[Contextual Occupancy Index]
    For an arm $i\in [N]$ in a \CBB{} problem instance, solve its \OMLP{} for the instance and obtain its \emph{\ContextualOccupancyIndex{}} contingent with state and context as follows:
    \begin{align*}
        \rho_i(s_i, k) & := \chi_i^\star(s_i, k)r_i(s_i, a_i; k) \\
        & = \frac{\mu_i^\star(s_i, 1; k)}{\mu_i^\star(s_i, 0; k) + \mu_i^\star(s_i, 1; k)}r_i(s_i, a_i; k).
    \end{align*}
\end{definition}

Occupancy Index is a more robust index for RMAB compared to the Whittle Index. It behaves the same as \VanillaWhittle{} when the RMAB is indexable, but the asymptotic optimality for standard RMAB does not require indexability in general~(\citet{Xiong_2022_asymptotic_optimality}). Despite its many appealing properties, the \ContextualOccupancySoftBudgetPolicy{} does not satisfies the \ContextSpecificBudgetConstraint{} because \OMLP{}-Constraint III is a relaxed version---only the \emph{expected} number of arms pulled at each time point for a given context falls within the budget quantity. Therefore, the optimal value of \OMLP{}, which can be achieved by the \ContextualOccupancySoftBudgetPolicy{}, is an upperbound for all policies that satisfies \ContextSpecificBudgetConstraint{}:
\begin{align*}
    \Reward^\text{VanillaWhittle}\le \Reward^\text{CBC-OPT} \le \Reward^\text{LP}.
\end{align*}
Therefore, we define the class of \FlexibleBudgetAllocCWIndexPolicy{} as follows:
\begin{definition}
[\FlexibleBudgetAllocCWIndexPolicy]
    A \emph{\FlexibleBudgetAllocCWIndexPolicy{}} determines a budget allocation $\vec B \in \mathcal B_0 := \{ \vec{B} \in \mathbb{N}^K : \sum_{k\in [K]} f_k B_k \le B \}$
    given total budget $B$ and context probabilities $\vec f$, and pulls the $B_{k}$ arms with the highest \ContextualOccupancyIndex{}.
\end{definition}
We consider within the class of \FlexibleBudgetAllocCWIndexPolicy{} and focus on determining the optimal budget allocation $\vec B$. When there is no confusion, we write as $\Reward(\vec B)$ the reward of \CBB{} running \FlexibleBudgetAllocCWIndexPolicy{} with budget allocation $\vec B$ We aim at designing efficient algorithms that solve the following optimization problem:
\begin{align*}
    \max_{\vec B}\ & \Reward(\vec B) \\
    s.t.\ 
    & \sum_{k\in [K]} f_k B_k \le B.  \tag{Constraint II}\\
\end{align*}

\subsection{The \underline{\textbf{C}}ontextual \underline{\textbf{Occ}}upancy Index (\ContextualOccupancyIndexPolicy{}) Policy and its Suboptimality}

\ContextualOccupancySoftBudgetPolicy{} achieves higher reward than \VanillaWhittle{} because it shifts budget across time --- by saving up budget at bad context and using them when context is good. Guided by this insight, we design the Contextual Occupancy Index Policy (\ContextualOccupancyIndexPolicy{}) that mimics the \ContextualOccupancySoftBudgetPolicy{} but also satisfies \ContextSpecificBudgetConstraint{}. Specifically, it assigns each context the average budget that \ContextualOccupancySoftBudgetPolicy{} would have used for each context, and pulls arms with similar priority.
\begin{definition}[The \underline{\textbf{C}}ontextual \underline{\textbf{Occ}}upancy Index (\ContextualOccupancyIndexPolicy{}) Policy]
    \ContextualOccupancyIndexPolicy{} determines $\vec B$ by assigning the budget that \ContextualOccupancySoftBudgetPolicy{} would uses (on average) to each context, which can be obtained from the optimal solution of \OMLP{}:
    $$B_k = \frac1{f_k} \sum_{i, s_i} \mu^\star_i(s_i, 1; k), \quad \text{for} k\in [K].$$ 
    At each time step, if the context is $k$, it pulls top-$B_k$ arm
    s that have the highest positive \emph{\ContextualOccupancyIndex{}}.
\end{definition}

Standard RMAB problems and the \VanillaWhittle{} policy correspond to $B_{k}$ being the same across all $k$, and in such situation, the \ContextualOccupancyIndexPolicy{} is equivalent to the Whittle Index Policy, and is asymptotically optimal.\footnote{The asymptotic notion is usually to repeat all arms of an RMAB instance infinitely, along with the budget for the same repeats.}  However, while numerical studies shows that \ContextualOccupancyIndexPolicy{}'s performance is usually close to \OMLP{}'s upperbound, the following theorem shows that the \ContextualOccupancyIndexPolicy{} is not optimal for \CBB{}, with proof in Appendix~\ref{sec:proof_56}:

\begin{restatable}{theorem}{theoremFiveOverSix}
    \label{theorem:5/6}
    The \ContextualOccupancyIndexPolicy{}'s asymptotic approximation ratio compared to $\Reward^\text{CBC-OPT}$ is bounded above by $\frac56$.
\end{restatable}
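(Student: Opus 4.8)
The plan is to exhibit a specific \CBB{} instance on which the \ContextualOccupancyIndexPolicy{} provably loses a constant factor against $\Reward^\text{CBC-OPT}$, and to engineer that instance so the loss ratio converges to exactly $\frac56$ as $N\to\infty$. The core intuition to exploit is the gap already visible in the proof of Theorem~\ref{thm:vanilla_whittle_canbe_arb_bad}: \ContextualOccupancyIndexPolicy{} inherits its per-context budget $B_k=\frac1{f_k}\sum_{i,s_i}\mu^\star_i(s_i,1;k)$ from the \emph{expected} (fractional) budget usage of the soft policy, then \emph{rounds} and commits to an integer number of pulls per context. The \OMLP{} upper bound is achieved by the soft policy, which is allowed to randomize its number of pulls around this mean; \ContextualOccupancyIndexPolicy{} cannot, so whenever the LP-optimal fractional budget in a high-reward context is strictly between consecutive integers, \ContextualOccupancyIndexPolicy{} is forced to systematically under- or over-pull relative to what the optimum does. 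First I would design a two-context instance (one ``rare, high-value'' context and one ``common, low-value'' context, as in Theorem~\ref{thm:vanilla_whittle_canbe_arb_bad}) where the arithmetic of this forced rounding yields a clean $\frac56$.

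Concretely, I would pick parameters so that the fractional budget the soft policy assigns to the good context is a half-integer (or some value whose floor captures exactly $\frac56$ of the achievable reward there), while the true optimal \ContextSpecificBudgetConstraint{}-feasible allocation $\Reward^\text{CBC-OPT}$ is permitted to place its integer budget more cleverly across contexts, subject only to the averaged Constraint II $\sum_k f_k B_k\le B$. The asymptotics are the crucial lever: I would let $N\to\infty$ so that the reward contributed by the good context dominates, and tune $f_k$, the rewards $r_i(\cdot,\cdot;k)$, and $B$ so that (i) the optimum's reward approaches some value $R^\star$, and (ii) \ContextualOccupancyIndexPolicy{}'s reward approaches $\frac56 R^\star$ because its rounded budget in the good context is a $\frac56$ fraction of the optimal pull count there. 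The steps in order: (1) specify the instance and verify it is a valid \CBB{}; (2) solve the \OMLP{} to obtain $\mu^\star$ and read off the budget $\vec B$ that \ContextualOccupancyIndexPolicy{} commits to; (3) compute $\Reward^{\ContextualOccupancyIndexPolicy{}}$ for this $\vec B$ exactly, including the integer rounding; (4) exhibit a feasible \ContextSpecificBudgetConstraint{} allocation achieving strictly more, lower-bounding $\Reward^\text{CBC-OPT}$; (5) take the limit and show the ratio tends to $\frac56$.

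The main obstacle I expect is step (3): unlike the soft policy, \ContextualOccupancyIndexPolicy{} pulls a fixed integer number of the top-indexed arms each round, so its stationary reward is \emph{not} simply the LP objective — I must actually compute the expected reward of the committed top-$B_k$ rule on the finite-$N$ instance and control its $N\to\infty$ behavior. This requires care about (a) how the \ContextualOccupancyIndex{} $\rho_i(s_i,k)$ orders arms when arms are stochastically identical (ties), and (b) whether the stationary state distribution under the committed policy matches the one the LP assumed, since committing an integer budget perturbs the occupancy measure away from $\mu^\star$. I would sidestep (b) by choosing the transition kernels degenerate enough (e.g., arms deterministically pinned to the active state, as in the earlier proof) that the state distribution is trivial and the only moving part is the budget-driven count of rewarded pulls. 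The remaining delicate point is ensuring the constant is exactly $\frac56$ rather than some nearby fraction; this pins down the free parameters, and I would verify it by an explicit limit computation, checking that no alternative feasible allocation for \ContextualOccupancyIndexPolicy{} (i.e.\ no better integer rounding consistent with its defining formula) does better on the constructed instance.
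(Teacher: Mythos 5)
There is a genuine gap, and it lies exactly where you chose to ``sidestep'' difficulty (b). The mechanism you propose --- integer rounding of the fractional LP budget $B_k=\frac1{f_k}\sum_{i,s_i}\mu^\star_i(s_i,1;k)$ --- cannot produce a constant-factor asymptotic loss. Under the paper's asymptotic notion (replicate every arm $\rho$ times and scale the budget by $\rho$; see the footnote attached to the \ContextualOccupancyIndexPolicy{} discussion and the $\rho$-scaled \CBB{} definition in the appendix), the committed per-context budgets grow as $\Theta(\rho)$, so rounding to an integer forfeits at most one pull out of $\Theta(\rho)$, a relative loss of $O(1/\rho)\to 0$. Worse, your proposed simplification of pinning all arms to the active state kills the theorem entirely: with static states, the reward of any committed allocation $\vec B$ is exactly $\sum_k f_k B_k r_k$ (every budgeted pull is a rewarded pull), which is precisely the objective the \OMLP{} maximizes over Constraint~II, so \ContextualOccupancyIndexPolicy{} would be \emph{asymptotically optimal} on such instances and no $\frac56$ gap could appear. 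Any correct construction must keep nontrivial state dynamics.

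The paper's actual source of suboptimality is the one you flagged as obstacle (b) and then discarded: committing a hard per-round budget perturbs the realized state distribution away from the LP's stationary occupancy measure. Concretely, the paper uses one arm type, two equiprobable contexts, and budget $\frac13\rho$; context~2 pays $1+\epsilon$ per pull but \emph{burns} the pulled arm (it transitions to inactive with probability~1), while context~1 pays $1$ and leaves arms essentially intact. The \OMLP{} chases the $\epsilon$ premium and allocates all budget to context~2 ($B_1=0$, $B_2=\frac23\rho$). A mean-field lemma (Binomial counts concentrating to Dirac deltas as $\rho\to\infty$) shows the fraction of active arms then evolves deterministically given the context sequence, with stationary law $\pi(\frac13)=\frac13$, $\pi(\frac23)=\frac16$, $\pi(1)=\frac12$; a third of the time only $\frac13\rho$ arms are active against a $\frac23\rho$ budget, so the budget is underutilized and the reward is $\frac{5}{18}\rho$. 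The allocation $B_1=\frac23\rho$, $B_2=0$ (feasible under \ContextSpecificBudgetConstraint{}) keeps all arms active and earns $\frac13\rho$, giving the ratio $\frac{5/18}{1/3}=\frac56$. So the instance must make the LP's preferred context \emph{self-defeating through state depletion}, and the analysis must track the stationary distribution of the active-arm fraction under the committed hard budget --- neither of which your plan, as written, would ever encounter.
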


The source of the suboptimality comes from when there are more than one contexts. The proof in Appendix~\ref{sec:proof_56} presents an original mathematical framework for asymptotic analysis.

\subsection{Finding the Optimal Budget Allocation}

Theorem~\ref{theorem:5/6} implies that \ContextualOccupancyIndexPolicy{} is suboptimal because it fails to determine the optimal budget allocation. In this section, we introduce the \BranchAndBound{} algorithm that finds the optimal budget allocation, and the \Mitosis{} algorithm that selects the near-optimal budget allocation in a no-regret fashion. 

In \CBB{}, the reward of \FlexibleBudgetAllocCWIndexPolicy{} under any budget allocation~$\vec B$ cannot be directly calculated from the \OMLP{}. We define the following methods for evaluating \CBB{}'s reward:
\begin{definition*}[\LP{} Upperbounds]
\ 

    \begin{itemize}
        \item
        For any budget allocation $\vec B$, denote as $\LP(\vec B)$ the optimal value of \OMLP{} with the following constraint inserted:
        \begin{equation*}
            \frac{1}{f_k}\sum_{i, s_i}\mu_i(s_i, 1; k) = B_k 
            \quad \forall B_k \in \vec{B}.
        \end{equation*}
        \item For any polytope region $\mathcal B \subseteq \mathcal B_0$, denote as $\LP(\mathcal B)$ the optimal value of $\max_{\vec B\in \mathcal B}\LP(\vec B)$.
    \end{itemize}
\end{definition*}

\begin{definition*}[Oracle Functions for $\vec B$]
    Fix any budget allocation $\vec B$, $\Reward(\vec B)$ can be evaluated by randomized simulation procedure that runs \FlexibleBudgetAllocCWIndexPolicy{} with $\vec{B}$ and returns the resulting reward of the \CBB{}. We define such as the following two types of \emph{oracle functions}:
    \begin{itemize}
        \item $\Oracle(\vec B)$: an oracle that returns an accurate estimate of $\Reward(\vec B)$ (eg. running many epochs of simulation).
        \item $\OracleSmall(\vec B)$: a fast oracle that returns a noisy estimate $\Reward(\vec B) + \epsilon$ (eg. running one epoch of simulation). Assume $\E \epsilon = 0$ and its variance is bounded.
    \end{itemize}
\end{definition*}

Notice that $\LP(\vec B) \ge \Oracle(\vec B)$, and $\LP(\mathcal B) \ge \max_{\vec B\in \mathcal B} \Oracle(\vec B)$, because \OMLP{} is a relaxation of the original problem. We design two algorithms that leverage this property search for the optimal budget allocation $\vec B^\star$ that maximizes $\Reward(\vec B)$.

\paragraph{The \BranchAndBound{} Algorithm} Using $\LP(\vec B) \ge \Oracle{}(\vec B), \forall \vec B$, this algorithm recursively splits the search region into smaller subregions and prunes subregions if its LP-based upperbound is lower than another's actual reward. 
Although \BranchAndBound{} is still NP-hard in the worst case, it provides a systematic way to efficiently search. We provide the pseudocode in Algorithm~\ref{alg:BnB}.

\begin{figure}[h]
    \centering
    \includegraphics[width=1\linewidth]{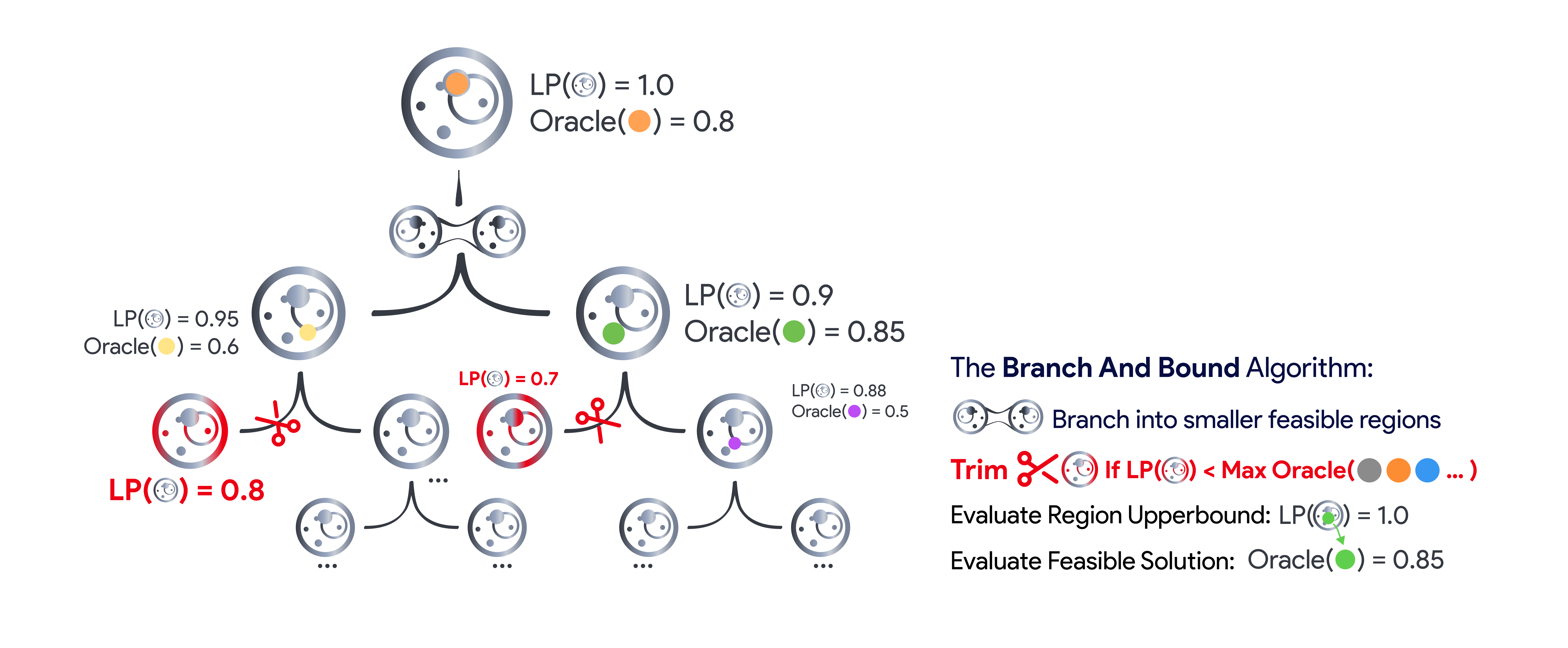}
    \caption{Visualization of \BranchAndBound{}}
\end{figure}

\begin{algorithm}[tb]
    \caption{\BranchAndBound{}}
    \label{alg:BnB}
    \textbf{Input}: Feasible Region $\mathcal B_0$, \LP{}, \Oracle{} \\
    \textbf{Output}: Budget Allocation $\vec{B}^*$
    \begin{algorithmic}[1]
        \STATE \textbf{Initialize:} $R^\text{OPT} \gets -\infty$, $\vec{B}^\text{OPT} \gets$ \texttt{None},  Queue $Q \gets \{\mathcal B_0\}$.
        \WHILE{$Q \neq \varnothing$}
            \STATE Dequeue $\mathcal B \gets Q.\texttt{pop}()$.
            \IF{$\LP(\mathcal B) < R^\text{OPT}$}
            \STATE \textbf{continue}.\COMMENT{prune $\mathcal B$}. 
            \ENDIF
            \STATE ${\vec B}^* \gets \vec B^\LP  (\mathcal B)$ \COMMENT{most promising $\vec B\in \mathcal B$}
            \IF{$\Oracle{}(\vec B^*) > L^*$}
                \STATE Update $L^* \gets\Oracle{}(\vec B^*)$ and $\vec{B}^\text{OPT} \gets \vec{B}^*$.
            \ENDIF
            \STATE \textbf{Branch:} Partition $\mathcal B = \mathcal B_1\cup \mathcal B_2$
            \STATE $Q \gets Q\cup \{\mathcal B_1, \mathcal B_2\}$.
        \ENDWHILE
        \STATE \textbf{return} $\vec{B}^*$.
    \end{algorithmic}
\end{algorithm}

\paragraph{The \Mitosis{} Algorithm} 
While \BranchAndBound{} already cuts down the search tree dramatically compared to brute force search, it still calls too many costly \Oracle{} evaluations on large-scale problems.
To address this issue, we develop the following multi-armed bandit (MAB) framework which allows for more nuanced speed-accuracy trade-off for the evaluation and search of better budget allocations:

\begin{definition}[Combinatorial MAB Framework for Solving \CBB{}]
    To find optimal $\vec B$, we define an associated \emph{Multi-Armed Bandit (MAB) problem} by identifying each arm with a vector $\vec{B} \in \mathcal{B}_0$. Pulling arm $\vec{B}$ invokes the fast oracle $\OracleSmall(\vec{B})$ which returns a noisy reward $\Reward(\vec{B}) + \epsilon$.
\end{definition}
For MAB, a standard UCB-type algorithm maintains empirical statistics for each arm and computes an index that serves as an upper confidence bound on the arm's true reward. For each arm (represented by $\vec{B}$) and time (in the MAB system) $\tau$, its upper-confidence-level index is given by
$
I_t(\vec{B}) := \hat{\mu}_t(\vec{B}) + f\bigl(N_t(\vec{B}), t\bigr),
$
where $N_t(\vec{B})$ is the number of times arm $\vec{B}$ has been selected and $\hat{\mu}_t(\vec{B})$ the empirical mean reward from $\vec{B}$.
$f$ is chosen so that, with high probability, $I_t(\vec{B})$ is an upper bound on the true mean reward $\mu(\vec{B})$. \footnote{For example, the classical UCB1 algorithm sets
$$
f\bigl(N_t(\vec{B}), t\bigr) = c \sqrt{\frac{\log t}{N_t(\vec{B})}},\text{ for }c> 0.
$$}

\paragraph{Addressing the Combinatorial Explosion with $\TreeArm$}
Naively applying UCB to our setting would result in combinatorial explosion, because the first step of UCB is to initialize all arms' empirical statistics by pulling each arm once. In \CBB{}, the number of arms is the number of integer solutions to $\sum_{k\in [K]} f_k B_k \le B$, which is $\mathcal{O}((BK)^K)$, so this initialization step is prohibitively expensive. To address this issue, we incorporate the hierarchical tree structure from \BranchAndBound{} to speed up our algorithms. 

We design $\TreeArm$ as a special arm that represents a \emph{group} of candidate budget allocations. Instead of tracking every $\vec{B}$, we use \TreeArm{} to encapsulate less-promising budget allocations, which are grouped in polytope regions~$\mathcal B_m \subseteq \mathcal B_0$. Formally,
\begin{definition}[\TreeArm{}]
    A \emph{\TreeArm{}} represents a union of polytopes subregions
    $
        \TreeArm{} = \cup_{m=1}^M \mathcal B_m.
    $
    When it is pulled, it splits out a most promising child arm:
    $
    \vec B_\text{new} := \argmax_{\vec B\in \TreeArm{}}\LP(\vec B),
    $
    and updates itself by excluding the new arm from itself ($\TreeArm \leftarrow \TreeArm \setminus \{\vec B_\text{new}\}$).

    The \TreeArm{}'s UCB index is set as $I_t(\TreeArm{}) := \max_{\vec B\in \TreeArm{}} \LP(\vec B)$.
\end{definition}

All arms $\vec B\in \TreeArm$ (before they are split out) will never be pulled. While these encapsulated arms have no empirical history for UCB value, , which upperbounds all encapsulated arms' rewards with probability $1$. For convenience, denote it as $\LP(\TreeArm)$. Note that because every pull of the \TreeArm{} splits out the child arm with the highest $\LP$ value, the \TreeArm{}'s UCB index $\LP(\TreeArm)$ decreases when it splits each time.

\Mitosis{} runs within the Combinatorial MAB Framework to find optimal budget allocation (see Algorithm~\ref{alg:Mitosis} for the pseudocode). It begins with the set of candidate arms containing only one $\TreeArm$, representing the entire feasible region $\mathcal{B}_0$. At each round, the algorithm selects from candidate arms (either a standard arm $\vec B$ or a $\TreeArm$) with the highest index. When a standard arm is pulled, we run $r_t(\vec B) \gets \OracleSmall{}(\vec B)$ to update its empirical statistics. When a $\TreeArm$ is selected, it splits out a new arm into candidate arms.  is named for how the \TreeArm{} ‘buds’ new arms progressively during the algorithm, similar to cell division in mitosis. Mitosis retains the no-regret guarantees of classical MAB algorithms. The proof is in Appendix~\ref{appendix:mitosis}.

\begin{restatable}{theorem}{thmMitosisNoRegret}
    [No-Regret of the Mitosis Algorithm]
    Let $\mathcal{A}$ denote the set of arms that have been pulled. After running the algorithm for $T$ rounds, the cumulative regret
    $
    R(T) \triangleq \sum_{t=1}^T \left( \mu^\star - \mu_{t} \right)
    $
    satisfies
    $$
    R(T) = \sum_{\vec{B}\in \mathcal{A}} \mathbb{E}[N_T(\vec{B})]\,\Delta(\vec{B}) = O\Biggl( \sum_{\vec{B}\in \mathcal{A}} \frac{\log T}{\Delta(\vec{B})} \Biggr),
    $$
    which matches the UCB1 regret bound.
\end{restatable}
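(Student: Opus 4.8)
The plan is to reduce the analysis to the classical UCB1 regret bound, treating \Mitosis{} as UCB1 run on a set of arms that is revealed lazily, with the \TreeArm{} playing the role of a single ``optimistic'' super-arm that dominates every allocation it still encapsulates. The one structural fact that makes this reduction clean is that the \TreeArm{}'s index is a \emph{deterministic} (probability-one) upper bound on the true mean reward of every encapsulated budget allocation, which is strictly stronger than the high-probability confidence intervals used for ordinary arms.

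First I would record the optimism lemma. Since the \OMLP{} is a relaxation, $\LP(\vec B) \ge \Reward(\vec B) = \mu(\vec B)$ for every $\vec B$, so $I_t(\TreeArm) = \max_{\vec B \in \TreeArm} \LP(\vec B) \ge \max_{\vec B \in \TreeArm} \mu(\vec B)$ holds with probability one. In particular, as long as the optimal allocation $\vec B^\star$ has not yet been split out it lies in \TreeArm{}, whence $I_t(\TreeArm) \ge \mu(\vec B^\star) = \mu^\star$. For the ordinary (split-out) arms I would invoke the standard choice of bonus $f(N_t(\vec B), t)$ so that $I_t(\vec B) \ge \mu(\vec B)$ on a high-probability event and $\hat\mu_t(\vec B)$ concentrates around $\mu(\vec B)$; in particular $I_t(\vec B^\star) \ge \mu^\star$ with high probability once $\vec B^\star$ is a standard arm. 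Combining the two cases yields the global optimism statement I need: at every round the largest index among all candidate arms is at least $\mu^\star$ (with probability one while $\vec B^\star$ is encapsulated, and with high probability afterward).

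Next I would bound the pulls of each suboptimal standard arm exactly as in UCB1. A suboptimal $\vec B$ with $\Delta(\vec B) = \mu^\star - \mu(\vec B) > 0$ is selected at round $t$ only if its index beats the index of whichever candidate currently carries the optimism guarantee (either $\vec B^\star$ itself or the \TreeArm{} encapsulating it), so $I_t(\vec B) \ge \mu^\star$ at the pull time. Together with concentration of $\hat\mu_t(\vec B)$, the usual counting argument gives $\mathbb{E}[N_T(\vec B)] \le c' \log T / \Delta(\vec B)^2 + O(1)$, hence $\mathbb{E}[N_T(\vec B)]\,\Delta(\vec B) = O(\log T / \Delta(\vec B))$; summing over the pulled arms $\mathcal A$ produces the claimed decomposition.

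The remaining step, and the point I expect to require the most care, is to show that the \TreeArm{} pulls are a lower-order term and do not corrupt the analysis. The key observation is that each \TreeArm{} pull removes exactly one allocation ($\TreeArm \leftarrow \TreeArm \setminus \{\vec B_\text{new}\}$) and the feasible set $\mathcal B_0$ is finite, of size $O((BK)^K)$; hence the \TreeArm{} is pulled at most $|\mathcal B_0|$ times in total, a constant independent of $T$. These rounds collect no oracle reward, so they contribute at most $|\mathcal B_0|$ times the maximum gap, an $O(1)$ term absorbed into the bound. The subtle thing to verify is that splitting never destroys optimism: because every split returns the highest-$\LP$ allocation still in the region, the \TreeArm{} index is non-increasing and the optimism attached to $\vec B^\star$ is never lost—it is either handed to $\vec B^\star$ as a freshly revealed standard arm or retained by the residual \TreeArm{}. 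With optimism preserved at every round and suboptimal pulls counted as above, the regret decomposition $R(T) = \sum_{\vec B \in \mathcal A} \mathbb{E}[N_T(\vec B)]\,\Delta(\vec B)$ matches the UCB1 bound.
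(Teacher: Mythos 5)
Your proposal is correct, and it rests on the same two pillars as the paper's proof: the deterministic optimism of the \LP{} bound (i.e., $\LP(\vec B)\ge \mu(\vec B)$ with probability one, hence $I_t(\TreeArm)\ge \mu^\star$ while $\vec B^\star$ is encapsulated) and the classical UCB1 counting argument for the split-out standard arms. Where you diverge is in how the event ``$\vec B^\star$ has not yet been split out'' is handled. The paper does a case analysis: Case 1 assumes $\vec B^\star\in\mathcal A$ and applies UCB1 verbatim; Case 2 assumes $\vec B^\star$ remains inside the \TreeArm{} at time $T$ and derives a contradiction-style bound, showing that the best already-revealed candidate $\vec B^{2nd}$ would then have to be pulled linearly often while keeping its UCB index above $\mu(\vec B^\star)$, an event whose probability decays exponentially in $T$; the Case-2 contribution to regret is then dismissed as negligible. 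You avoid the case split entirely with a unified optimism-preservation argument (the optimism certificate is always held by either $\vec B^\star$ or the residual \TreeArm{}), and you dispose of the \TreeArm{} rounds by the finiteness observation that each pull removes one allocation from a set of size $O((BK)^K)$, so \TreeArm{} selections contribute at most a $T$-independent constant to the regret. Your route is arguably tighter in one respect the paper glosses over: \TreeArm{} selections do consume rounds and yield no oracle reward, so the theorem's exact equality $R(T)=\sum_{\vec B\in\mathcal A}\mathbb E[N_T(\vec B)]\,\Delta(\vec B)$ needs precisely the $O(1)$ correction term you identify, whereas the paper's Case 1 simply asserts that encapsulated arms ``do not contribute to regret.'' The paper's approach, in exchange, makes explicit the quantitative rate (exponential in $T$) at which the optimal arm gets revealed, which your finiteness argument does not track; but for the stated $O\bigl(\sum_{\vec B\in\mathcal A}\log T/\Delta(\vec B)\bigr)$ bound, both arguments suffice.
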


\begin{remark*}
    The reason that cannot directly apply classical MAB algorithms is \ContextSpecificBudgetConstraint{} has combinatorially many integer solutions, making it infeasible to evaluate every solution even once for no-regret UCB-type algorithms to initialize empirical statistics . We pull inspiration from \BranchAndBound{}'s tree structure to design the \TreeArm{} that encapsulates many arms, accommodated by \CBB{}'s \OMLP{} which provides a strong upperbound for the \TreeArm{}'s UCB index. By marrying the no-regret MAB algorithms with \CBB{}'s special solution space structure, Mitosis leverages the best of both worlds: it efficiently search through the combinatorial solution space under \OracleSmall{}'s fast but noisy reward estimates.
\end{remark*}

\begin{figure}[h]
    \centering
    \includegraphics[width=1\linewidth]{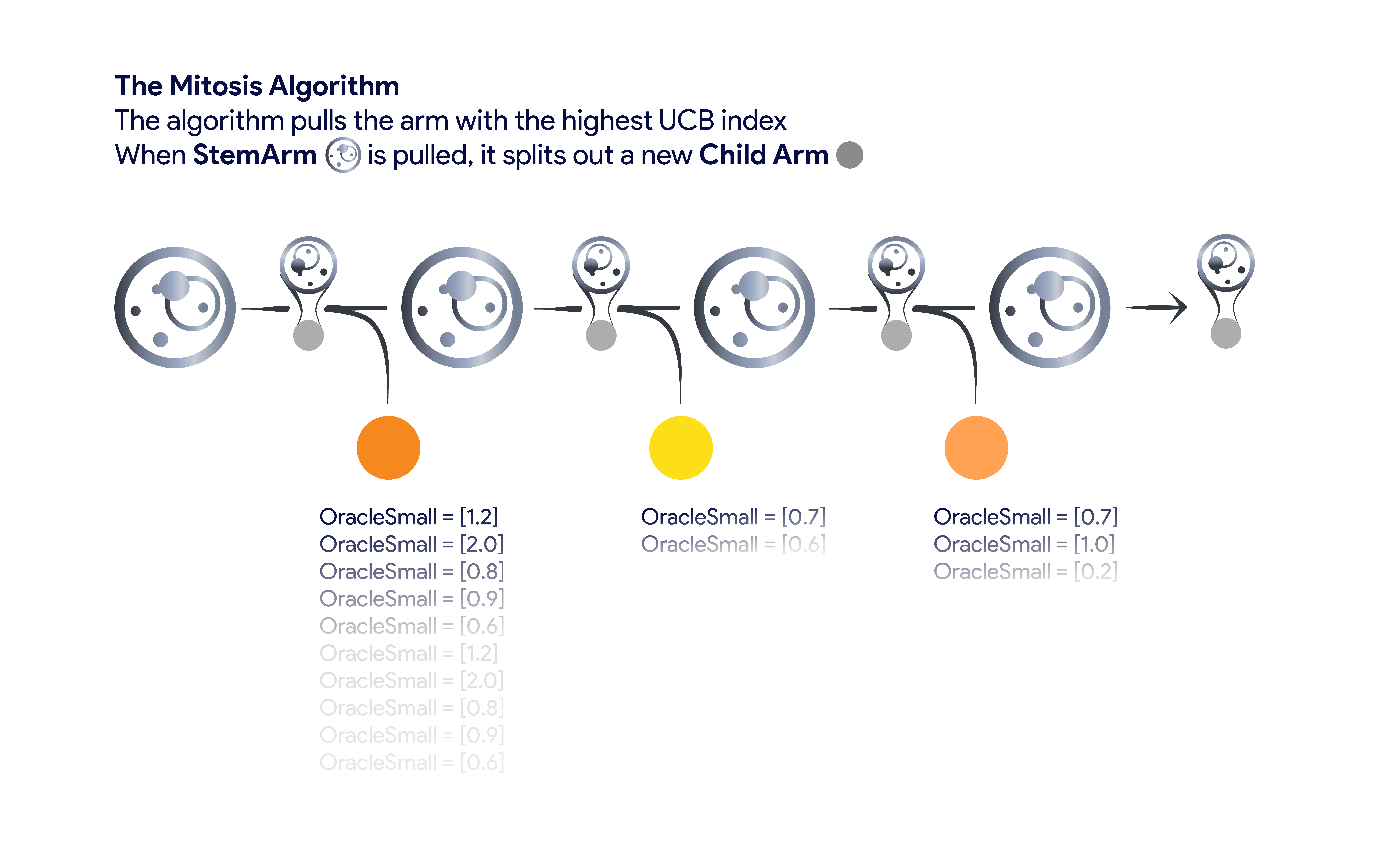}
    \caption{Visualization of \Mitosis{}}
\end{figure}

\begin{algorithm}[tb]
    \caption{\Mitosis{}}
    \label{alg:Mitosis}
    \textbf{Input:} Feasible region \(\mathcal{B}_0\), LP upper-bound function \(\LP\), fast oracle \(\OracleSmall\) \\
    \textbf{Auxiliary:} UCB-type no-regret algorithm $I_{t}(\cdot)$ \\
    \textbf{Output:} Budget allocation \(\vec{B}^*\) with high empirical reward
    \begin{algorithmic}[1]
        \STATE \textbf{Initialize:}
            \begin{itemize}
                \item Initialize $\TreeArm{}:=\{\mathcal B_0\}$.
                \item Candidate set (heap) \(\mathcal{A} \gets \{\TreeArm\}\).
                \item Set time \(t \gets 0\).
            \end{itemize}
        \WHILE{\(t < T\) \textbf{and} stopping condition not met}
            \STATE Select from candidate set w.r.t. UCB index:
            \[
                a^* \gets \argmax_{a \in \mathcal{A}} I_t(a),
            \]
            \IF{\(a^*\) is a \textbf{standard arm} (i.e., corresponds to a specific \(\vec{B}\))}
                \STATE Pull arm and observe reward $r_t(a^*)\gets \OracleSmall{}(\vec B)$.
                \STATE Update the empirical statistics \(N_t(a^*)\) and \(\hat{\mu}_t(a^*)\).
            \ELSE
                \STATE \textbf{// \(a^*\) is a \(\TreeArm\); splits out and pull new arm.}
                \STATE \(\TreeArm\) splits arm \(a^*\) to obtain a new standard arm \(a'\) with allocation \(\vec{B}_{a'} = \vec{B}_\text{new}\).
                \STATE Insert \(a'\) into the candidate set: \(\mathcal{A} \gets \mathcal{A} \cup \{a'\}\).
            \ENDIF
            \STATE \(t \gets t + 1\), update the candidate set \(\mathcal{A}\) with $I_{t+1}(\cdot)$.
        \ENDWHILE
        \STATE \textbf{return} The allocation \(\vec{B}^*\) corresponding to the arm in \(\mathcal{A}\) with the highest empirical mean reward.
    \end{algorithmic}
\end{algorithm}
\section{Fairness in \CBB{}}
\label{sec:fairness}

\begin{definition}
    Define fairness index for \textit{any} \CBB{} policy~$\pi$ as follows\footnote{Assume that $f_k > 0$ for all $k\in [K]$.}:
    \begin{equation*}
      \Fairness(\pi) : = \min_{k\in [K]} \frac1{f_k}\frac{\lim_{T\to \infty} \E_{(a, s)\sim \pi, k_t\sim \mathcal F}[\sum_{t = 1}^T \sum_{i\in [N]} r_i(s_i, a_i; k_t)\mathbb I\{k_t = k\}]}{\lim_{T\to \infty} \E_{(a, s)\sim \pi, k_t\sim \mathcal F}[\sum_{t = 1}^T \sum_{i\in [N]} r_i(s_i, a_i; k_t)]}.
    \end{equation*}
\end{definition}

This definition captures the idea that a fair algorithm should achieve reward for each context in proportion to the context's frequency. Or, it can also be understood as minimum over the average reward of each context divided by total average reward. Observe that $\Fairness \in [0,1 ]$ by construction. $\Fairness = 1$ indicates perfect fairness, $\Fairness = 0$ indicates extreme unfairness, where at least one context receives no reward regardless of its occurance frequency. In requirement of fairness within the \FlexibleBudgetAllocCWIndexPolicy{} class, we consider the following optimization:
\begin{align*}
    \max_{\vec B}\ & \Reward(\vec B) \\
    s.t.\ 
    & \sum_{k\in [K]} f_k B_k \le B  \tag{Constraint II}\\
    & \Fairness(\vec B) \ge \theta \tag{Constraint for Fairness}
\end{align*}

\paragraph{Modifying \ContextualOccupancyIndexPolicy{}, \BranchAndBound{} and \Mitosis{} for Fairness}
We obtain a fair version of \ContextualOccupancyIndexPolicy{} by adding the following \emph{Linear Constraint for Fairness} to the \OMLP{}:
\begin{align}
  \underbrace{\frac 1{f_k} {\sum_{i\in [N]} \sum_{s_i,a_i}\mu_i(s_i,a_i;k)r_i(s_i,a_i;k)}}_\text{reward for type $k$}\ge \theta& \underbrace{\left(\sum_{i\in [N]}\sum_{k\in [K]}\sum_{s_i,a_i}\mu_i(s_i,a_i;k)r_i(s_i,a_i;k)\right)}_\text{total reward} ,\quad \forall k\in [K] \tag{Linear Constraint for Fairness}
\end{align}

\BranchAndBound{} and \Mitosis{} can be directly modified to incorporate the fairness constraint by adjusting the $\LP{} upperbounds$ and the oracle functions. Let the fairness requirement be $\theta \in [0, 1]$. We define the following methods for evaluating \CBB{}'s reward under fairness constraint:
\begin{definition*}[Fairness-aware \LP{} Upperbounds and Oracles]
  For any budget allocation $\vec B$, polytope region $\mathcal B$ and fairness requirement $\theta$, we extend the definitions of \LP{} upperbounds and oracles as follows:
\begin{itemize}
    \item $\LP_\text{fair}(\vec B)$: the optimal value of the \OMLP{} with additional Linear Constraint for Fairness inserted.
    \item $\LP_\text{fair}(\mathcal B)$: the maximum of $\LP_\text{fair}(\vec B)$ over all $\vec B\in \mathcal B$.
    \item 
    $
    \Oracle_\text{fair}(\vec B) := \Oracle_\text{fair}(\vec B) \times \mathbb I\{\Fairness(\vec B) \ge \theta\},
    $ where $\Fairness(\vec B)$ is the fairness index achieved by \FlexibleBudgetAllocCWIndexPolicy{} with budget allocation $\vec B$.
    \item $\OracleSmall_\text{fair}(\vec B) := \OracleSmall_\text{fair}(\vec B) \times \mathbb I\{\Fairness(\vec B) \ge \theta\}$. Similarly, $\Fairness(\vec B)$ is the fairness index achieved by \FlexibleBudgetAllocCWIndexPolicy{} with budget allocation $\vec B$.
  \end{itemize}
\end{definition*}
By expanding the definitions of $\LP(\cdot)$, $\Oracle(\cdot)$ and $\OracleSmall(\cdot)$ to their fairness-aware versions, we can directly apply \BranchAndBound{} and \Mitosis{} to solve the optimal budget allocation under fairness constraint.
\section{Experiments}

In this section, we empirically evaluate the performance of \ContextualOccupancyIndexPolicy{}, \BranchAndBound{} and \Mitosis{} in \CBB{}, and compare them against three baseline algorithms: \textit{\Random{}}, \textit{\Greedy{}} and the \textit{\VanillaWhittle{}} Policy.
We first demonstrate the algorithms' performance, comparing their reward and runtime on numerical simulations. Then, we show that some of these algorithms could be sensitive to problem instance parameters, while our \Mitosis{} algorithm consistently performs the best, robust against all these different setups that could arise in the real world.
Finally, we run \CBB{} experiment on real-world food rescue data which further confirms the superiority of \Mitosis{}.
For a coherent presentation and concrete argument, we describe all experiments in the food rescue context. That said, the results in Section~\ref{sec:synthetic_food_rescue} and~\ref{sec:sensitivity_experiments} are evidently generalizable beyond the food rescue setting.
All experiments are run on an AMD Ryzen 5955WX CPU with 128GB RAM.

\subsection{Experiments on Synthetic Data}
\label{sec:synthetic_food_rescue}

\paragraph{Food Rescue \CBB{} Setup} Consider a food rescue platform notifying volunteers to pick up food donation delivery tasks. There are $K$ regions. Each region $k\in[K]$ is modeled as a context, and is associated with a \textit{popularity} index $\mathrm{pop}_k\in \Re$. There are $N$ volunteers. Each $i\in [N]$ volunteer is modeled as an arm. Every volunteer has a historical rescue record vector $H_i = (k_1, k_2, \ldots), k_j\in [K]$ that records which region's tasks they have taken. Each volunteer or region is endowed with a location $(x, y)$. Depending on whether we run experiments on synthetic data or real-world data, we obtain these attributes either by sampling from random distributions or directly from food rescue data.

At each time step $t\le T$, a food rescue trip arises from one region $k\in [K]$ chosen independently with probability $f_k$ ($\sum_{k\in [K]}f_k = 1$).
The decision is to notify some volunteers via action $a_i^t =1$, subject to the \ContextSpecificBudgetConstraint{}.
Volunteers' state space is binary: \textit{active}~($s = 1$) or \textit{inactive}~($s = 0$). Only by notifying active volunteers the platform obtains reward. Reward and state-action dependent transition probabilities are contingent on regions' and volunteers' attributes and distance. 
We introduce the high-level intuition here and defer the details to Appendix~\ref{appendix:activeness_details}.
We model two types of food rescue volunteers. There are ``organic'' volunteers, whose reward and transition dynamics are a function of their distance from the region and their historical activities. Then there are ``churner'' volunteers, who could have a high likelihood of claiming a trip in some regions, but then slide to inactive states and very hard for them to become active again. In the food rescue application, as is true in many applications, both types of users exist. It is thus crucial to model them separately. For now, we assume both types of volunteers appear equally frequently in the volunteer population. In Section~\ref{sec:sensitivity_experiments}, we will relax this assumption.

\paragraph{Experiment Setup and Results}

\begin{figure}
    \centering
    \includegraphics[width=0.66\linewidth]{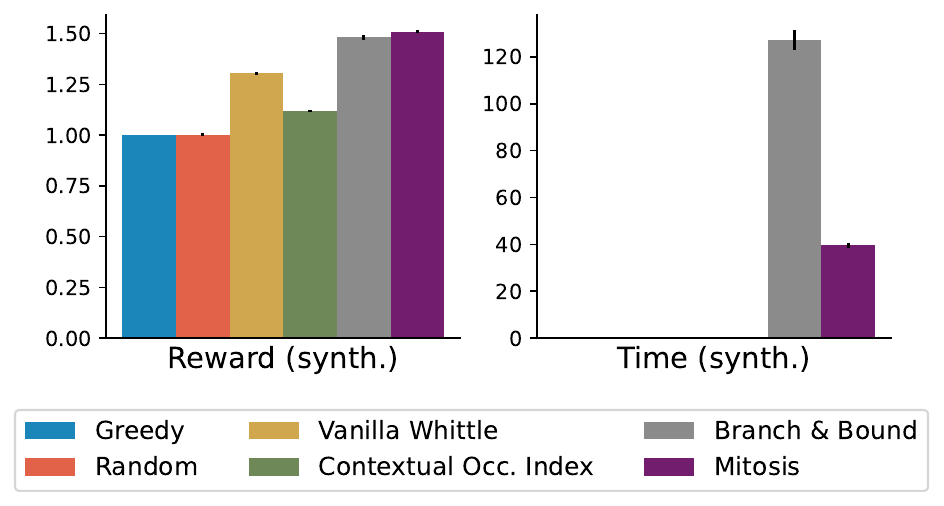}
    \caption{Main synthetic experiment with 50 arms, 3 contexts, and 0.1 budget ratio. Figures show normalized mean reward (left) and runtime in seconds (right) averaged over 32 seeds. \Mitosis{} yields the highest reward at significantly lower computation than~\BranchAndBound{}.
}
    \label{fig:main_synthetic}
\end{figure}

In a synthetic \CBB{} instance (50 arms, 3 contexts, budget=5), we run 32 seeds, each with 100 trials. We compare our proposed algorithms (\ContextualOccupancyIndexPolicy{} and \Mitosis{}) with the following benchmarks: \textit{\Random{}} policy that selects arms uniformly at random; \textit{\Greedy{}} policy that selects arms with the highest immediate reward $r_i^k(s_i^t,1)$, the aforementioned \textit{\VanillaWhittle{}} policy that is asymptotically optimal in standard RMAB, and \textit{\BranchAndBound{}} that is guaranteed to compute optimal budget allocation but is computationally expensive. 

Bar plots~(Figure~\ref{fig:main_synthetic}) compare the reward of each algorithm normalized by \Random{}'s, and runtime measured in seconds. \Mitosis{} (purple) achieves the highest reward overall, while \BranchAndBound{} (gray) is almost as good, but much slower. In fact, we set a timeout limit and \BranchAndBound{} often terminates before it finds the optimal reward. The simpler baselines (\Greedy, \Random, \VanillaWhittle, and \ContextualOccupancyIndexPolicy{}), though taking almost no time to initialize, provide moderate to lower rewards, with the \ContextualOccupancyIndexPolicy{} notably underperforming.

\begin{figure}
    \centering
    \includegraphics[width=0.88\linewidth]{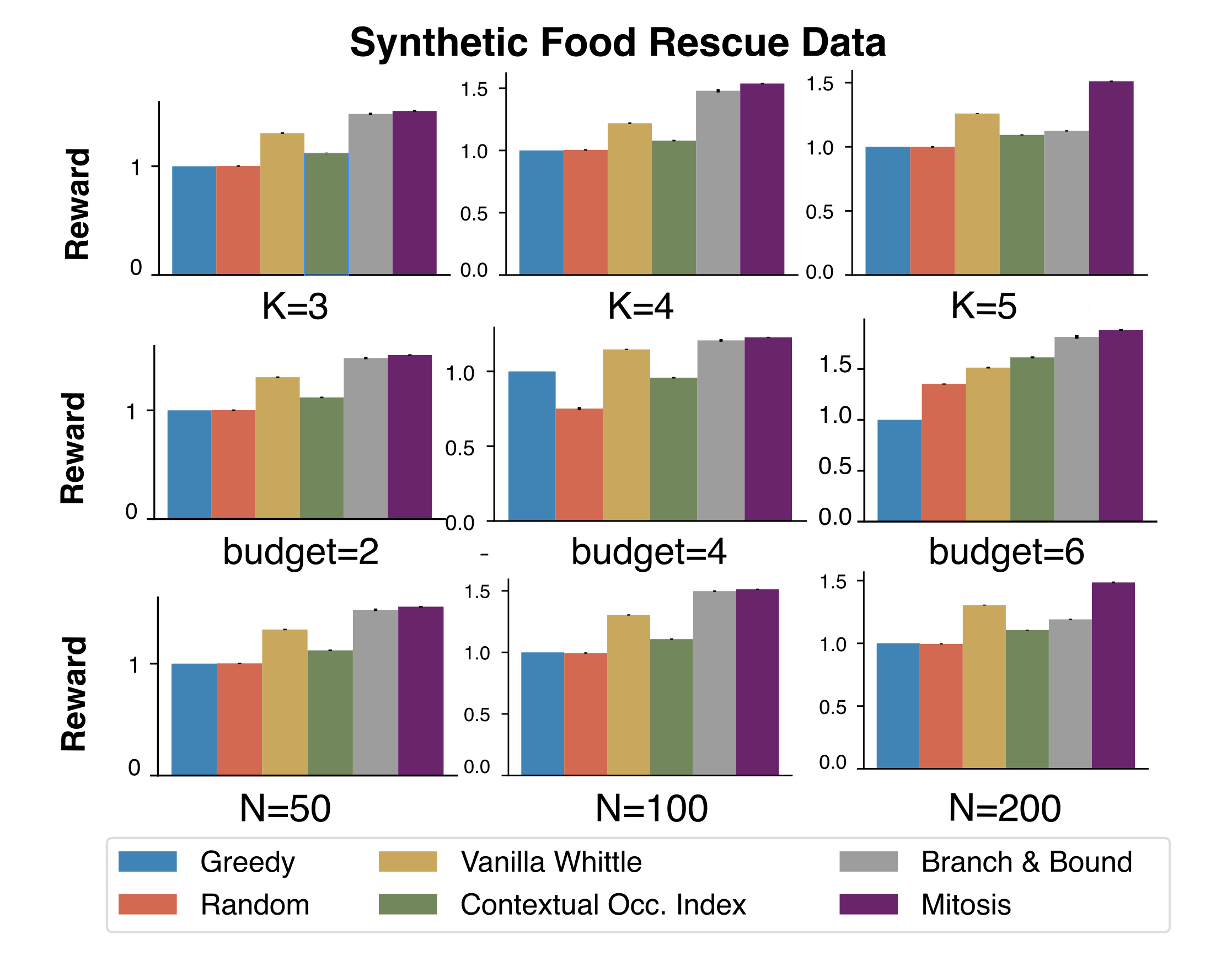}
    \caption{Ablation studies of the synthetic food rescue experiments to Figure~\ref{fig:main_synthetic}. \Mitosis{} generally performs the best across different problem sizes.
}
    \label{fig:main_synthetic_ablations}
\end{figure}

We also run ablation studies by varying the number of volunteers $(N = 50, 100, 200)$, the number of regions $(K = 3, 4, 5)$ and budgets $(B = 2, 4, 6)$. 
Figure~\ref{fig:main_synthetic_ablations} shows the rewards of the various algorithms when varying $N$ and $B$. Due to page limit, we defer the time plots and other reward plots to Appendix~\ref{appendix:experiment_details}.
Generally, similar performance pattern holds as the scale of the instance increases. \Mitosis{} consistently performs the best, while \BranchAndBound{}'s performance drops due to timeout. 

\subsection{Sensitivity Analysis}
\label{sec:sensitivity_experiments}

\begin{figure}[t]
    \centering
   \includegraphics[width=0.66\linewidth]{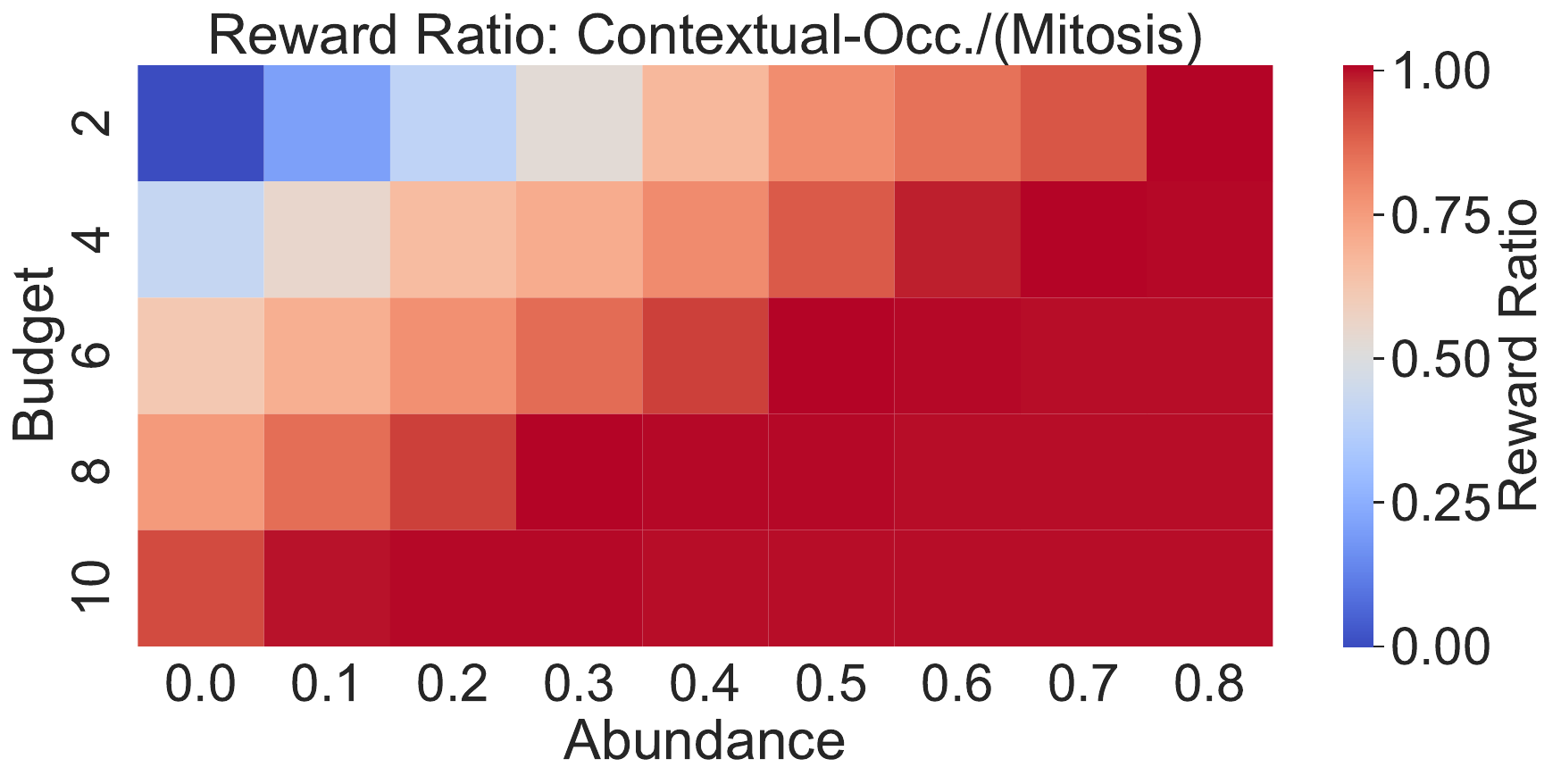}
    \caption{Heatmap showing the ratio between \ContextualOccupancyIndexPolicy{} and \ZoomingBranch{}'s reward. \Mitosis{} has a performance edge over \ContextualOccupancyIndexPolicy{} as the proportion of churner volunteers increases and as the notification budget decreases. 
}
    \label{fig:heatmap}
\end{figure}

While the ablation results in Figure~\ref{fig:main_synthetic_ablations} demonstrate the robustness of the \Mitosis{} algorithm, some questions remain. Namely, we assumed that there are two types of volunteers -- organic and churner -- and they appear equally likely. In reality, for different applications, the ratio between organic and churner users may vary greatly. 
In this section, we aim to paint a complete picture of the algorithms' performance under various conditions.

To begin with, we refer to the proportion of organic volunteers in the volunteer population as ``abundance''.
We systematically vary the abundance and budget~$B$ in food rescue \CBB{}, and and plot the heatmap of the ratio between \ContextualOccupancyIndexPolicy{}'s and \Mitosis{}'s rewards in Figure~\ref{fig:heatmap}. 
As shown in the figure, as abundance increases, the reward gap between \Mitosis{} and \ContextualOccupancyIndexPolicy{} gradually closes, and similarly when the notification budget increases. However, if we have more churner volunteers, or if we have small notification budget relative to the overall number of volunteers, the performance of \ContextualOccupancyIndexPolicy{} could suffer. For churners, their different attitudes (transition dynamics) towards different regions (contexts) lead the \ContextualOccupancyIndexPolicy{} algorithm to make mistakes in action selection.

\paragraph{Completely Random \CBB{}.}
To further stress test our main algorithm \Mitosis{}'s performance in settings most adversarial to it, we run experiments on the following simple setup with completely randomly generated \CBB{} instances.
    Let transition probabilities $\Pr[s^{t + 1} \mid s , a; k]$ and rewards $r(s, a; k)$ be generated from the following distributions:~\footnote{We use a common modeling assumption where rewards accrue only from engaged and available arms~\citep{Zhao2020NetworkMABBook}.}
    
    \begin{align*}
    & \Pr[s^{t + 1}_i = 1\mid s , a; k] \sim \mathrm{Clip}(\mathrm{Normal}(\mu^{(s, a)}_k, \sigma^{(s, a)}_k), 0, 1)\\
    & \Pr[s_i^{t + 1} = 0\mid s, a; k] = 1- \Pr[s_i^{t + 1} = 1\mid s, a; k]\\
    & r_{i}(s=1, a=1; k) \sim \mathrm{Normal}(\mu^r_k, \sigma^r_k)\\
    & r_i(s, a;k) = 0 \text{ otherwise.}
    \end{align*}
    where $\mu_k^{(s, a)}, \sigma_k^{(s, a)}$ and $\mu_k^{r}, \sigma_k^{r}, \;\forall k, s, a$ are all sampled i.i.d. from $\Uniform[0, 1]$. To guarantee indexability, we further enforce $\Pr[s^{t + 1}_i = 1\mid s = 1, a = 1; k] < \Pr[s^{t + 1}_i = 1\mid s = 1, a = 0; k]$ (fatigue) and $\Pr[s^{t + 1}_i = 1\mid s = 0, a = 1; k] > \Pr[s^{t + 1}_i = 1\mid s = 0, a = 0; k]$ (recovery).
    Context distributions over $[K]$ is defined by sampling weights ${w}_k \sim \text{Uni}[0, 1]$ and normalize $f_k = \frac{w_k}{\sum_{\kappa}w_\kappa}, \forall k\in [K]$.
    With this setup, we have ripped off almost all the real-world relevance and treat this as a pure mathematical model.

\begin{figure}
    \centering
    \includegraphics[width=0.66\linewidth]{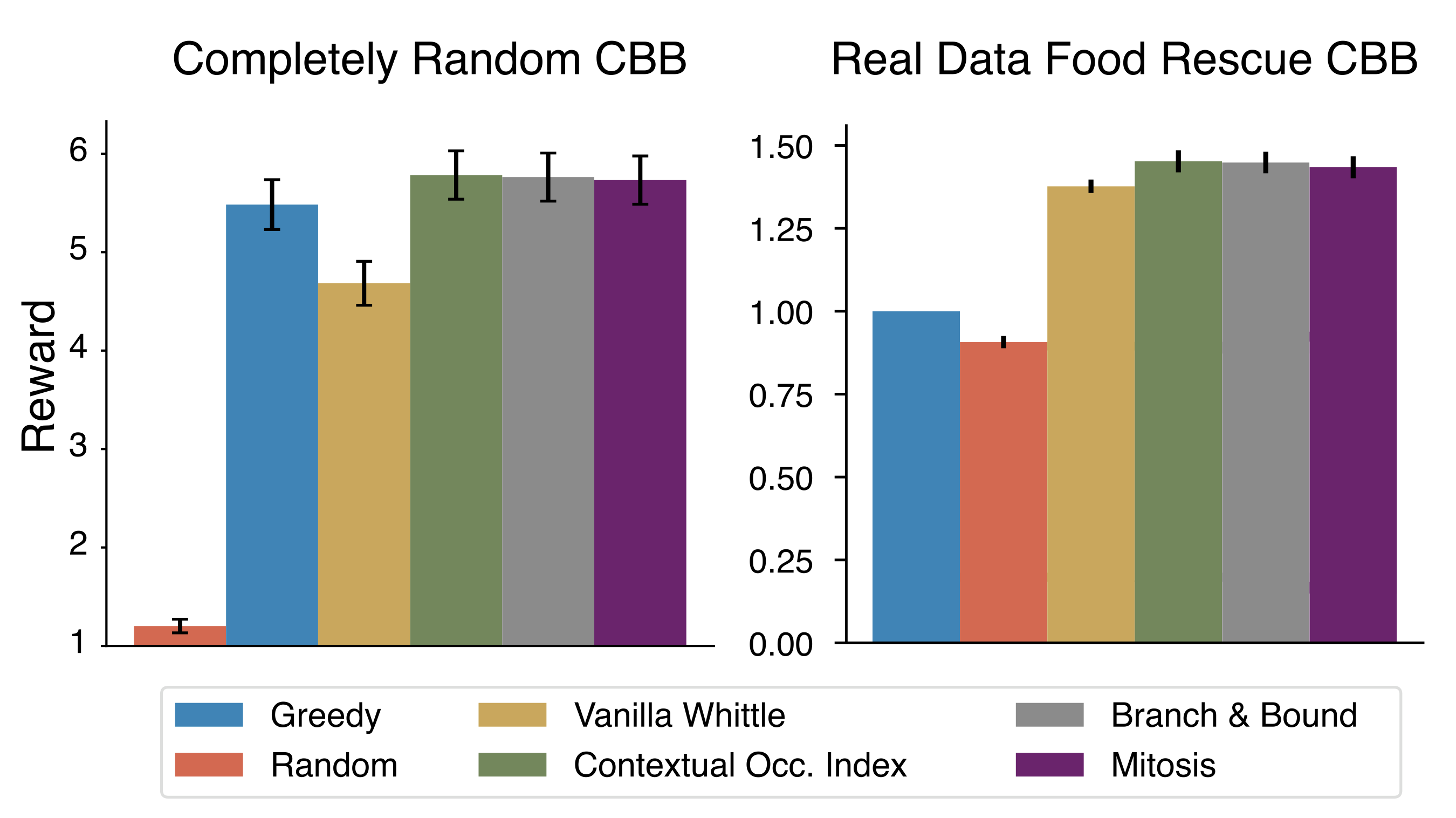}
    \caption{Reward of different policies for Completely Random \CBB{} (Left) and Real-data Food Rescue \CBB{} (Right)}
    \label{fig:completely_generic_barplot}
    \label{fig:main_real}
\end{figure}

We run the experiment with 50 arms, 5 contexts, and a notification budget of 5. We run 32 seeds with 100 trials each. The bar plot in Figure~\ref{fig:completely_generic_barplot} compares the reward and runtime of \Mitosis, \ContextualOccupancyIndexPolicy{} and other aforementioned baselines.
In Figure~\ref{fig:completely_generic_barplot}, context-agnostic \VanillaWhittle{} policy does not perform well, which validates Theorem~\ref{thm:vanilla_whittle_canbe_arb_bad} that \VanillaWhittle{} can perform arbitrarily poorly in the worst case for \CBB{}. 
Meanwhile, context-aware policies -- \Mitosis{}, \ContextualOccupancyIndexPolicy{} and \BranchAndBound{} -- perform equally best. This is as expected since the instance sampling parameters are homogeneous and resemble the organic volunteer case above.

With this, we arrive at the main takeaway from Sections~\ref{sec:synthetic_food_rescue} and~\ref{sec:sensitivity_experiments}. \ContextualOccupancyIndexPolicy{} generally outperforms the baselines, and performs optimally in some settings. However, it fails in some other settings. On the other hand, \Mitosis{} and \BranchAndBound{} perform optimally in all settings, as the theoretical guarantee suggests. However, \Mitosis{} is much more computationally efficient than \BranchAndBound{}, making it a clear winner of all.

\subsection{Experiment with Real Food Rescue Data}
\label{sec:real_experiments}
We construct food rescue \CBB{} from real data by sampling from a total pool of more than 500 thousand volunteers. Volunteers’ and regions’ attributes (locations and other idiosyncratic factors) are obtained from real-world data. The experiment setup is similar as the synthetic experiments in section~\ref{sec:synthetic_food_rescue} (same set of policies, N=50, K=3, Budget=5, 32 seeds with 100 trials per seed). Results are shown in the barplot in Figure~\ref{fig:main_real}.

On real data, the context‐aware methods (\ContextualOccupancyIndexPolicy{}, \BranchAndBound{} and \Mitosis{}) outperform \Greedy{}, \Random{} and \VanillaWhittle{}. \BranchAndBound{} yields the highest average reward but requires disproportionately longer runtimes. By contrast, \Mitosis{} nearly matches \BranchAndBound{} while significantly reducing computation. Notably, \ContextualOccupancyIndexPolicy{} catches up with \Mitosis{}---confirming it benefits from real‐world attribute structure. The policies' performance trend is similar when we vary the number of volunteers, the number of regions and budget level in the ablation studies (see Appendix~\ref{appendix:experiment_details} for details). This implies in application, \ContextualOccupancyIndexPolicy{} is sufficient for near-optimal performance. \Mitosis{} guarantees optimality and is significantly faster than \BranchAndBound{}.

\subsection{Fairness in Practice}

Food rescue organizations want to ensure equitable volunteer engagement across regions. We evaluate the fairness of different policies in food rescue \CBB{} using the fairness metric defined in Section~\ref{sec:fairness}. Figure~\ref{fig:pareto_frontier} shows the Pareto frontier of \Reward versus \Fairness, illustrating the inherent trade-off between the two objectives. Among fairness-aware policies, \Mitosis{} consistently achieves the highest rewards across varying fairness levels. \BranchAndBound{} performs comparably but incurs substantially higher computational costs. While \ContextualOccupancyIndexPolicy{} has reasonable performance, it is consistently outperformed by both \Mitosis{} and \BranchAndBound{}, especially under stricter fairness requirements. These results suggest that \Mitosis{} is a strong candidate for food rescue platforms seeking to balance reward and fairness.

\begin{figure}[t]
\centering
\includegraphics[width=0.5\linewidth]{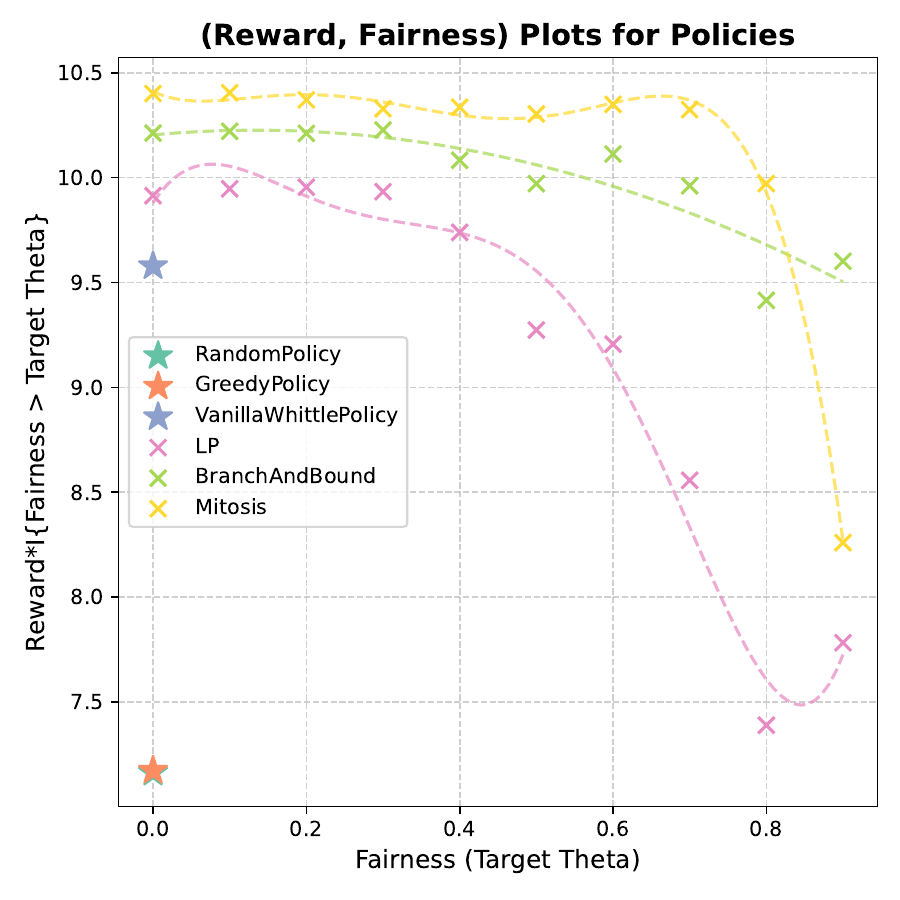}
\caption{Pareto frontier of reward versus fairness.}
\label{fig:pareto_frontier}
\end{figure}

For a representative real-world \CBB{} instance with 1,000 volunteers, three regions, and a budget of 200, computational efficiency becomes a critical consideration in this large-scale deployment. \Mitosis{} fails to terminate within 12 hours, whereas \ContextualOccupancyIndexPolicy{} completes in under a minute. This efficiency makes \ContextualOccupancyIndexPolicy{} an attractive choice for platforms seeking a practical balance among fairness, reward, and runtime.

\begin{figure}[t]
\centering
\includegraphics[width=1\linewidth]{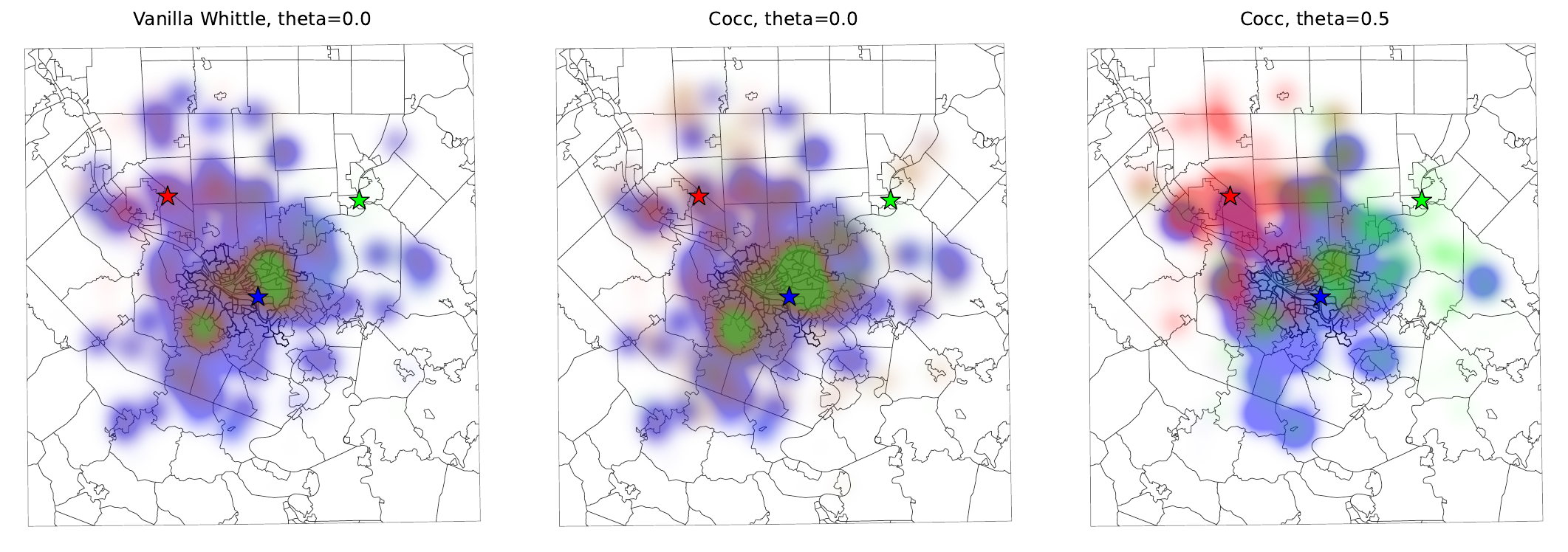}
\caption{Volunteer allocation under different policies for three Pittsburgh-area regions. Colors indicate the frequency of volunteer matches with each region. Without fairness constraints, the central downtown region (blue) dominates allocation; with fairness requirements, volunteer support shifts toward suburban regions, especially the under-served northwestern region (red).}
\label{fig:food_rescue_map}
\end{figure}

Figure~\ref{fig:food_rescue_map} visualizes volunteer allocations for three Pittsburgh-area regions under different policies. The central ``popular'' downtown region (blue) has 720 nearby volunteers clustered with an average distance of 7.7 km to their rescue spots\footnote{Volunteers are clustered based on proximity to their closest rescue location.}. In contrast, the least favored northwestern region (red) has only 85 volunteers on average and a longer average distance of 10.6 km.

The figure highlights a striking disparity in volunteer allocations. Under \VanillaWhittle{} or \ContextualOccupancyIndexPolicy{} with no fairness requirement, the central downtown region (distincted in blue) dominates volunteer engagement, receiving disproportionately high support due to its density and accessibility. However, as fairness constraints increase (e.g., 50\% fairness requirement), the allocation becomes more balanced: the under-served suburban regions, particularly the northwestern  region (in red), receive substantially more volunteer support. This illustrates how fairness-aware policies can effectively correct structural imbalances in volunteer distribution while retaining competitive reward performance.

\section*{Acknowledgement}
We thank~\frname{} for thoughtful discussions and continued support of this work.
Co-author Tang would like to thank the faculty and students of the Institute of Theoretical Computer Science at Shanghai University of Finance and Economics for their support and valuable comments.
Co-author Raman is also supported by an NSF GRFP Fellowship.
Co-author Shi is partially supported by a Google Academic Research Award and a John C. Mascaro Faculty Fellowship.

\clearpage
\bibliographystyle{named}
\bibliography{bibFiles/fairness_papers,%
              bibFiles/RMAB_papers,%
              bibFiles/contextualRMAB}

\clearpage
\appendix

\section{Proof of Theorem~\ref{theorem:5/6}}
\label{sec:proof_56}
\theoremFiveOverSix*
 \begin{proof}

    \textbf{Outline} First, we formally establish the asymptotic framework, which is where the Whittle Index Policy for standard RMAB achieves optimality. Then we introduce how to analyze \CBB{}'s in this asymptotic regime. Finally, we present the instance where the $\frac 56$ bound is achieved.

\subsection*{Asymptotic Notion}
    
    We define the asymptotic notion for analyzing (sub-)optimality for \CBB{}. It is same to the approach for standard RMAB, originally proposed by ~\cite{Weber_Weiss_1990} for RMAB with stochastically identical arms and generalized to heterogeneous arms by~\cite{Xiong_2022_asymptotic_optimality}:
    \begin{definition}[$\rho$-scaled \CBB{}]
    Fix a \emph{Base} \CBB{} instance with $M$ arms
    $$\langle M, \mathcal{S}, \mathcal{A}, K, \lbrace r_{i}^{k} \rbrace_{i \in [M], k \in \mathcal{K}}, \lbrace P_{i}^{k} \rbrace_{i \in [M], k \in \mathcal{K}}, \mathcal{F}\rangle.$$
    With budget $B\in \mathbb N$.

    Now, consider each arm being replicated $\rho$ times, with the budget scaled by $\rho$ as well. The new \CBB{} instance has $\rho \times M$ arms, with each of the $M$ arms in the base \CBB{} repeated $\rho$ times. Budget is scaled to $\rho B$.
\end{definition}

For a base contextual RMAB instance scaled with $\rho$, when there is no confusion about the base instance we're referring to, denote its reward for any policy $\pi$ as
$$
\Reward^\pi(\rho) := \lim_{T \to \infty} \mathbb{E}_{\vec {a} \sim \pi(\cdot), k\sim \mathcal F}\Bigl[\frac{1}{T} \sum_{t=1}^T \sum_{i \in [\rho M]} r_i^k(s_i^t, a_i^t)\Bigr].
$$

For $\rho$-scaled \CBB{}, notice that its reward upperbound from solving the \OMLP{} simply scales with $\rho$: 
$$
\Reward^\text{LP}(\rho) = \rho \overline{\Reward}(1).
$$

We refer to every arm $i\in [M]$ in the base instance as a \textbf{type-$i$} arm, and its $\rho$ replicates in the $\rho$-scaled \CBB{} as the $\rho$ type-$i$ arms.

\subsection*{Asymptotic System Behavior for \CBB{}}

In the following section we introduce a new method for analyzing the asymptotic behavior of \CBB{} as $\rho\to\infty$. It is different from the standard approach of~\cite{Weber_Weiss_1990}. 

To ease the complication of notations, we describe our method with \CBB{} that $r_i(s_i=0, a=0; k) = r_i(s_i=0, a=1; k)=0, \forall i, k$, and transition probabilities $P_i^ks_i^{t+1}\mid s_i^{t}, a_i^t=1] = P_i^ks_i^{t+1}\mid s_i^{t}, a_i^t=0],\forall s_i^{t+1}, s_i^t$. In this way it is meaningless to pull inactive arms, since it makes no difference in rewards nor transition probabilities. Generalization to general \CBB{} is without loss of generality.

We care about the proportion of active arms as $\rho\to \infty$ of the $\rho$. The following technical lemma characterizes the dynamic of arms:
\begin{lemma}
\label{lem:direc_delta}
    Denote as $a_i^t$ the proportion of type-$i$ active arms at any time point $t$ under given policy $\pi$. Conditional on $a_i^t$ and context $k$, $a_i^{t + 1}$'s distribution converges to a Direc Delta function $\delta_\text{shift}(\cdot) $, shifted with $\mathbb E[a_i^{t + 1} \mid a_i^t, k]$ as the total number of arms $\rho\to \infty$. In other words,
    \begin{align}
        f(a_i^{t + 1} \mid a_i^t, k) & = \delta_
        {\mathbb E[a_i^{t + 1} \mid a_i^t, k]}(a_i^{t +1 }),
    \end{align}
    where, let $B_{i, k}$ be the number of active type-$i$ arms pulled by the policy at context $k$:
    \begin{align}
        \mathbb E[a_i^{t + 1} \mid a_i^t, k] & = \min(a_i^t, \frac{B_{i, k}}\rho)P_i^ks_i^{t+1} =1 \mid s_i^t=1, a_i^t=1] \\
     & + (a_i^t - \min(a_i^t, \frac{B_{i, k}}\rho)\\
     & \quad \times P_i^ks_i^{t+1} =0 \mid s_i^t=1, a_i^t=1] \\
     & + (1 - a_i^t)P_i^ks_i^{t+1} =1 \mid s_i^t=0]
    \end{align}

    \begin{proof}
    The sketch of the proof is that, the \textbf{number} of active arms is sum of Binomial random variables, with parameters given the by the policy and transition probabilities. As total number of arms $\rho\to \infty$, each Binomial variable divided by $\rho$ converges to (shifted) Direc Delta. Therefore, the \textit{proportion} of active arms is also (shifted) Direc Delta.
    \paragraph{Notes on Binomial Distribution}
        To make later analysis clear, first consider a single binomial random variable $X$ with $N$ experiments and success rate $p$ (i.e. $X \sim \text{Bin}(N, p)$). For any $x\in [0, 1]$ (assume $Nx$ is integer):
        \begin{align*}
            P[X = Nx]
            & = {N \choose Nx} p^{Nx} (1 - p)^{N(1 - x)}\\
            & \text{Apply Stirling's Formula: $n!\sim \sqrt{2\pi n} \cdot (\frac {n}e)^n$}\\
            & = \sqrt{\frac{1}{x(1 - x)N}} \cdot \left((\frac{p}{x})^{x}(\frac{1 - p}{1 - x})^{(1 - x)}\right)^N
        \end{align*}
        It can be verified that $(\frac{p}{x})^{x}(\frac{1 - p}{1 - x})^{(1 - x)} < 1$ for $x\ne p$. Therefore, as $N\to \infty$
        $$
        P[X = xN] = \begin{cases}
            \sqrt{\frac{1}{x(1 - x)N}} \to \infty& x = p\\
            \sqrt{\frac{1}{x(1 - x)N}} \\\quad \times \mathcal O(\left((\frac{p}{x})^{x}(\frac{1 - p}{1 - x})^{(1 - x)}\right)^N) \to 0& x \ne p
            \end{cases}
        $$
        Therefore, say if we let $f(x) = P[X = xN]$, $f(\cdot)$ is a shifted-to-$p$ Direc Delta function.

        \paragraph{Stationary Distribution \ContextualBudgetBandit{}} Let $A_i^t:= a_i^t \rho$ denote the \textbf{number of active type-$i$ arms} at time point $t$. Conditional on current $A_i^t$ and context $k$,
        \begin{itemize}
            \item $\min(A_i^t, B_{i, k})$ arms are pulled, where each arm remains active w.p. $P_i^ks_i^{t+1} =1 \mid s_i^t=1, a_i^t=1]$.
            \item Each of $\rho - A_i^t$ inactive arms transfers back to active w.p. $P_i^ks_i^{t+1} =1 \mid s_i^t=0]$,
        \end{itemize}   
        Therefore, the number of active arms at next period $A_i^{t + 1}$ is the sum of three binomial random variables:
        \begin{align}
            & \lbrace A_i^{t + 1} \mid A_i^t,  k \rbrace \\
        \sim\ &  \underbrace{\text{Bin}(\min \lbrace A_i^t, B_{i, k}\rbrace, P_i^ks_i^{t+1} =1 \mid s_i^t=1, a_i^t = 1])}_\text{active arms pulled staying active}\\
        & +  \underbrace{\text{Bin}(A_i^t - \min \lbrace A_i^t, B_{i, k}\rbrace, P_i^ks_i^{t+1} =1 \mid s_i^t=1, a_i^t=0])}_\text{idle active arms staying active} \\
        & + \underbrace{\text{Bin}(\rho - A_i^t, P_i^ks_i^{t+1} =1 \mid s_i^t=0])}_\text{inactive arms transfer back to active}.
        \end{align}
        Scaled by $\rho \to \infty$, each of the above binomial distribution converges to a Direc Delta function centered on its mean. Since adding up random variables is equivalent to taking convolution of their probability mass functions---Direc Delta functions are closed under convolution---random variable $a_i^{t + 1} =\frac {A_i^{t + 1}} \rho$'s probability mass function is a Direc Delta shifted by $\frac1\rho\mathbb E[A_i^{t + 1} \mid A_i^t, k]$.
    \end{proof}
\end{lemma}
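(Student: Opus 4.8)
The plan is to recognize Lemma~\ref{lem:direc_delta} as a one-step law of large numbers for the empirical fraction of active arms, and to prove it by a variance-based concentration argument. Fix the type index $i$, the context $k$, and condition on the value $a_i^t$, so that $A_i^t = \rho\, a_i^t$ is the (given) number of active type-$i$ arms. Because the $\rho$ replicas of a type-$i$ arm are stochastically identical and transition independently, the next-period active count $A_i^{t+1}$ decomposes as a sum of three \emph{independent} binomial random variables: the $\min\{A_i^t, B_{i,k}\}$ pulled active arms, each staying active with probability $P_i^k[s^{t+1}=1\mid s^t=1,a=1]$; the remaining $A_i^t-\min\{A_i^t, B_{i,k}\}$ idle active arms, each staying active with the corresponding idle probability; and the $\rho - A_i^t$ inactive arms, each recovering with probability $P_i^k[s^{t+1}=1\mid s^t=0]$. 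The budget cap enters only through the $\min\{A_i^t, B_{i,k}\}$ term, and in the $\rho$-scaled instance each of the three trial counts is either a fixed fraction of $\rho$ or negligible compared with $\rho$.

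The core step is to show each normalized component concentrates. Writing a generic component as $X_j\sim\mathrm{Bin}(n_j,p_j)$ with $n_j\le\rho$, the normalized variable $X_j/\rho$ has mean $(n_j/\rho)\,p_j$ and variance $n_j p_j(1-p_j)/\rho^2 \le 1/(4\rho)$, which tends to $0$. Chebyshev's inequality then yields $X_j/\rho \to (n_j/\rho)\,p_j$ in probability as $\rho\to\infty$ (components with $n_j = o(\rho)$ simply contribute $0$ in the limit, since $X_j/\rho\le n_j/\rho$). This is the rigorous content behind the Stirling-type pointwise estimate of the binomial probability mass function: the fluctuation collapses onto the mean at rate $\Theta(\rho^{-1/2})$, so the rescaled law degenerates to a point mass.

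Finally I would combine the three components. Since each of $X_1/\rho, X_2/\rho, X_3/\rho$ converges in probability to its constant mean and addition is continuous, the continuous mapping theorem gives $a_i^{t+1} = A_i^{t+1}/\rho \to \sum_j (n_j/\rho)\,p_j$ in probability, and direct substitution of the three trial counts identifies this limit with $\mathbb{E}[a_i^{t+1}\mid a_i^t, k]$ as written in the lemma. Convergence in probability to a constant is exactly weak convergence to the Dirac point mass at that constant, which is the asserted limit $\delta_{\mathbb{E}[a_i^{t+1}\mid a_i^t,k]}$.

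The main point requiring care is the mode of convergence in this last step. The informal statement that ``Dirac deltas are closed under convolution'' is not by itself a valid justification, because weak convergence of summands does not in general survive addition; what makes the argument go through is that each summand converges to a \emph{constant} (equivalently, in probability), and that the three summands are independent conditional on $A_i^t$ and $k$. I would therefore route the combination through convergence in probability rather than through convolution of limiting densities, and would record the harmless degenerate cases ($a_i^t\in\{0,1\}$, or a budget cap that zeroes out a component) where one of the binomials has no trials and is a point mass at $0$ by fiat.
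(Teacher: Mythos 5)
Your proof is correct and rests on exactly the same structural decomposition as the paper's: conditional on $A_i^t$ and $k$, the next-period active count is the sum of three independent binomials (pulled active arms, idle active arms, recovering inactive arms), and the limit is identified with $\mathbb{E}[a_i^{t+1}\mid a_i^t,k]$. Where you diverge is in how concentration is established and combined, and your route is the more rigorous of the two. The paper proves concentration via a Stirling-formula pointwise estimate of the binomial PMF, $P[X=xN]\sim\sqrt{1/(x(1-x)N)}\,\bigl((p/x)^x((1-p)/(1-x))^{1-x}\bigr)^N$, showing the mass at $x\ne p$ vanishes exponentially while the mass at $x=p$ grows like $\sqrt{N}$; this is quantitatively sharper (it exhibits the $\Theta(\rho^{-1/2})$ local scale and exponential tails) but is heuristic as a statement of distributional convergence --- pointwise PMF asymptotics along the lattice, with the ``assume $Nx$ is integer'' caveat, do not by themselves define a limit law, and the paper then combines the three components by asserting that ``Dirac Delta functions are closed under convolution,'' which, as you correctly flag, is not a valid justification for limits of distributions in general. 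Your Chebyshev bound $\mathrm{Var}(X_j/\rho)\le 1/(4\rho)$ gives convergence in probability to the constant mean, and your observation that convergence in probability to a constant is precisely weak convergence to the Dirac point mass --- so that the three summands combine by elementary continuity, with independence not even strictly needed once each limit is constant --- supplies exactly the missing rigor in the paper's convolution step. You also handle the degenerate cases ($a_i^t\in\{0,1\}$, zero-trial binomials, components with $o(\rho)$ trials) that the paper passes over silently. One small point worth noting: your second component correctly uses the idle-transition probability $P_i^k[s^{t+1}=1\mid s^t=1,a=0]$, which matches the paper's proof body; the lemma statement itself contains a typo there (it writes the probability for $a=1$), so your version is the intended one.
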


The lemma implies, the \emph{proportion} of active arms $a_i^t$ evolve ``almost deterministically''---more precisely speaking, fix any policy $\pi$, if at current time step the proportion of active arms is $a_i^t$, context is $k$, the next time step will have $(\mathbb E[a_i^{t + 1} \mid a_i^t, k])$\% active arms almost surely, where $(\mathbb E[a_i^{t + 1} \mid a_i^t, k])$ is given by the following:
\begin{align*}
    \label{eq:E_y_given_x_k}
    & \mathbb{E}[{a_i^{t + 1}} \mid a_i^t, k] \\
    = \ &  \frac{1}{\rho} \mathbb{E}[{A_i^{t + 1}} \mid A_i^t, k] \\
    = \ & \frac{1}{\rho} \mathbb{E}[ \underbrace{\text{Bin}(\min \lbrace A_i^t, B_{i, k}\rbrace, P_i^ks_i^{t+1}=1 \mid s_i^t = 1, a_i^t=1])}_\text{active arms pulled}\\
        & + \underbrace{\text{Bin}(A_i^t - \min \lbrace A_i^t, B_{i, k}\rbrace, P_i^ks_i^{t+1}=1 \mid s_i^t = 1, a_i^t=0])}_\text{untouched active arms} \\
        & + \underbrace{\text{Bin}(\rho - A_i^t, q_i)}_\text{inactive arms}
     ] \\
    = \ & \frac{1}{\rho} (
        \min \lbrace A_i^t, B_{i, k}\rbrace \cdot P_i^ks_i^{t+1}=1 \mid s_i^t = 1, a_i^t=1] \\
        & +
        (A_i^t - \min \lbrace A_i^t, B_{i, k}\rbrace) \cdot P_i^ks_i^{t+1}=1 \mid s_i^t = 1, a_i^t=0] \\
        & +
        (\rho - A_i^t)q_i
    ) \\
    \end{align*}
    Denote $\beta_{i, k} := \frac{B_{i, k}}{\rho}$:
    \begin{align}
    & \mathbb{E}[{a_i^{t + 1}} \mid a_i^t, k]\\
    \label{eq:mu_k(A)_expression_1}
    = \ &\min(a_i^t, \beta_{i, k}) \cdot P_i^ks_i^{t+1}=1 \mid s_i^t = 1, a_i^t=1] \\
    \label{eq:mu_k(A)_expression_2}
     & + \max(a_i^t - \beta_{i, k}, 0) \cdot P_i^ks_i^{t+1}=1 \mid s_i^t = 1, a_i^t=0]\\
     \label{eq:mu_k(A)_expression_3}
    & + (1 - a_i^t)P_i^ks_i^{t+1}=1 \mid s_i^t = 0]
\end{align}

 If, current time step's proportion of active arms is $x\in [0, 1]$, with probability $f_k$ context $k$ occurs, then the next time step's active-arm proportion will be $y = \mathbb E[{a_i^{t + 1}} \mid x, k]$ (as given in~\ref{eq:mu_k(A)_expression_1}-\ref{eq:mu_k(A)_expression_3}) w.p. $f_k$. And for each $y$, define its inverse
\begin{align}
\label{eq:mathcal_X_y}
    \mathcal X (y) :=\lbrace (x, k): \mathbb E[a_i^{t + 1}\mid x, k] = y \rbrace.
\end{align}
Denote the stationary distribution of proportion of active arms as $\pi:[0, 1]\to [0, 1]$, it should satisfy:
\begin{align}
    \label{eq:stationary_distribution_expression}
    \pi(y) = \sum_{(x, k)\in \mathcal X(y)} f_k\pi(x).
\end{align}

\subsection{A 5/6 Approximation Upperbound.}
\label{appendix2_5/6_instnace}
\paragraph{An adversarial instance}
Consider a base \CBB{} example with only one type of arm (i.e., $M=1$).  Let there be $\rho$ copies of this arm in the scaled setting as $\rho \to \infty$.  We drop the index $i$ for convenience.  Suppose there are two contexts, $k \in \{1, 2\}$, each occurring with probability $f_1 = f_2 = 0.5$.  

Let $\epsilon>0$.  The transition probabilities and rewards are defined as follows.
\begin{itemize}
    \item Context 1: transition probabilities is
    \begin{align*}
       &  P^1[s^{t+1}=1\mid s=1, a=1]=1-\epsilon\\
       &  P^1[s^{t+1}=0\mid s=1, a=1]=\epsilon\\
       &  P^1[s^{t+1}=1\mid s=1, a=0]=1\\
       & P^1[s^{t+1}=0\mid s=1, a=0]=0\\
       & P^1[s^{t+1} =1\mid s=1, \forall a= 0, 1]=1\\
       & P^1[s^{t+1} =0\mid s=1, \forall a= 0, 1]=0\\
    \end{align*}
    reward for context 1:
    \begin{align*}
        & r(s^t=1, a^t=1; k=1) = 1\\
        & r(s^t=1, a^t=0; k=1) = 0\\
        & r(s^t=0, a^t=1; k=1) = 0\\
        & r(s^t=1, a^t=0; k=1) = 0
    \end{align*}
    \item Context 2: transition probabilities is
    \begin{align*}
       &  P^2[s^{t+1}=1\mid s=1, a=1]=0\\
       &  P^2[s^{t+1}=0\mid s=1, a=1]=1\\
       &  P^2[s^{t+1}=1\mid s=1, a=0]=1\\
       &  P^2[s^{t+1}=0]\mid s=1, a=0]=0\\
       &  P^2[s^{t+1} =1\mid s=1, \forall a= 0, 1]=1\\
       &  P^2[s^{t+1} =0\mid s=1, \forall a= 0, 1]=0\\
    \end{align*}
    reward for context 2:
    \begin{align*}
        & r(s^t=1, a^t=1; k=1) = 1 + \epsilon\\
        & r(s^t=1, a^t=0; k=1) = 0\\
        & r(s^t=0, a^t=1; k=1) = 0\\
        & r(s^t=1, a^t=0; k=1) = 0
    \end{align*}
\end{itemize}

\paragraph{Bugdet} Assume that budget is $1/3$ of the number of total arms. I.e. in the $\rho$-scaled instance, $B=\lfloor \frac 13\rfloor$. As the scaling factor $\rho \to \infty$, we can without loss of generality assumes that it's an interger.

\paragraph{The Reward for \ContextualOccupancyIndexPolicy{}}

The \OMLP{} for the base instance simplifies to
\begin{align*}
    \max_{\mu, B_k} \quad & \mu(1, 1, 1)  + \mu(1, 1, 2)(1 + \epsilon)\\
    \text{subject to}\quad & \\
    & (1 - P[s = 1]) = \epsilon\mu(1, 1, 1) + \mu(1, 1, 2)\\
    & \mu(1, 1, k) \le \frac12 P[s = 1], \forall k = 1, 2\\
    & \mu(1, 1, 1) + \mu(1, 1, 2) \le \frac13
\end{align*}
The \ContextualOccupancyIndexPolicy{} then allocate budget following the optimal solution ($\mu^\star$) of the \OMLP{}. For the $\rho$-scaled \CBB{} with total budget $B=\frac13 \rho$, the budget allocation of \ContextualOccupancyIndexPolicy{} is
\begin{align*}
    B_1 & = \rho \times \frac1{f_1}\mu^\star(1, 1, 1) = 0,\\
    B_2 & = \rho \times \frac1{f_2} \mu^\star(1, 1, 2) = \frac23.
\end{align*}
From~(\ref{eq:mu_k(A)_expression_1}-\ref{eq:mu_k(A)_expression_3}) we obtain, as $\rho\to\infty$, the transition dynamic of the proportion of active arms $x^t\to x^{t+1}$ in RMAB:
\begin{itemize}
    \item With probability $f_1=0.5$, context~$k=1$:
    $$
    x^{t+1} = \mathbb E[a_i^{t + 1}\mid x^t, k]  = 1;
    $$
    \item With probability $f_2=0.5$, context~$k=2$:
    $$
    x^{t+1} =\mathbb E[a_i^{t + 1}\mid x^t, k]  = \max(x^t - \frac23, 0) + 1 - x^t.
    $$
\end{itemize}

From~\ref{eq:mathcal_X_y} and~\ref{eq:stationary_distribution_expression} we obtain the stationary distribution $\pi$ under $\policy^\star$: (actually, guess-and-verify)
\begin{align*}
    &\pi(\frac13) = \frac13,\\
    &\pi(\frac23) = \frac16\\
    &\pi(1) = \frac12,\\
    &\pi(x) = 0, \text{otw.}
\end{align*}
When the proportion of active arms$=\frac13$---only half of the budget is utilized. This happens, as give above, w.p. $\pi(\frac13) = \frac13$. So the reward as $\rho\to\infty$ is
\begin{align*}
    & \Reward^\text{\ContextualOccupancyIndexPolicy{}}(\rho) \\
    =\ &  f_2( \frac13 \rho\pi(\frac13) + \frac23 \rho (\pi(\frac23) + \pi(1)) \\
    =\ & 0.5\rho(\frac19 + \frac49) = \frac5{18}\rho
\end{align*}

\paragraph{Optimal Budget Allocation}
However, notice that the other context $k = 1$ is almost always active (it has probability $p = \epsilon$ of transfer to inactive). Therefore, if we allocate all budget to context~$1$:
$$B_1=\frac23, B_2=0$$.
The stationary reward for the optimal budget allocation is
$$
\Reward^\text{ContextOpt} (\rho)= \frac13 \rho
$$

Therefore, the \ContextualOccupancyIndexPolicy{}'s approximation is bounded above by $\frac 56$.
$$
  \lim_{\rho\to\infty}\frac{\Reward^\text{\ContextualOccupancyIndexPolicy{}}(\rho)}{\Reward^\text{ContextOpt}(\rho)} = \frac56.
$$
\end{proof}

\subsection{Remark: Closed-form unavailable}

Ending remark for this Appendix section, and as a complement to the asymptotic analysis of \CBB{}, we provided the following example, where, the closed-form solution of the staionary distribution of the proportion of active arms can only be calculated numerically but not characterized in clean closed-form as the above example.

\paragraph{Single-Type Base Example 2}

Consider a base \CBB{} example with only one type of arm (i.e., $M=1$).  Let there be $\rho$ copies of this arm in the scaled setting as $\rho \to \infty$.  We drop the index $i$ for convenience.  Suppose there are two contexts, $k \in \{1, 2\}$, each occurring with probability $f_1 = f_2 = 0.5$. 
\textbf{Transition Probabilities and Rewards.} Let $\epsilon>0$.  The transition probabilities and rewards are defined as follows.

\paragraph{Transition Probabilities and Rewards.}
Let $\epsilon>0$.  The transition probabilities and rewards are defined as follows.
\begin{itemize}
    \item \textbf{Context 1:} 
    \emph{Transition probabilities}:
    \begin{align*}
       &P^1[s^{t+1} = 1 \mid s = 1, a = 1] = 1 - \epsilon,\\
       &P^1[s^{t+1} = 0 \mid s = 1, a = 1] = \epsilon,\\
       &P^1[s^{t+1} = 1 \mid s = 1, a = 0] = 1,\\
       &P^1[s^{t+1} = 0 \mid s = 1, a = 0] = 0,\\
       &P^1[s^{t+1} = 1 \mid s = 0, \forall a=0, 1] = \frac12\\
       &P^1[s^{t+1} = 0 \mid s = 0, \forall a=0, 1] = \frac12\\
    \end{align*}
    \emph{Rewards}:
    \begin{align*}
       &r(s^t = 1, a^t = 1; k=1) = 1,\\
       &r(s^t = 1, a^t = 0; k=1) = 0,\\
       &r(s^t = 0, a^t = 1; k=1) = 0,\\
       &r(s^t = 0, a^t = 0; k=1) = 0.
    \end{align*}
    
    \item \textbf{Context 2:}
    \emph{Transition probabilities}:
    \begin{align*}
       &P^2[s^{t+1} = 1 \mid s = 1, a = 1] = 0,\\
       &P^2[s^{t+1} = 0 \mid s = 1, a = 1] = 1,\\
       &P^2[s^{t+1} = 1 \mid s = 1, a = 0] = 1,\\
       &P^2[s^{t+1} = 0 \mid s = 1, a = 0] = 0,\\
       &P^2[s^{t+1} = 1 \mid s = 0, \forall a=0, 1] = \frac12\\
       &P^2[s^{t+1} = 0 \mid s = 0, \forall a=0, 1] = \frac12\\
    \end{align*}
    \emph{Rewards}:
    \begin{align*}
       &r(s^t = 1, a^t = 1; k=2) = 1 + \epsilon,\\
       &r(s^t = 1, a^t = 0; k=2) = 0,\\
       &r(s^t = 0, a^t = 1; k=2) = 0,\\
       &r(s^t = 0, a^t = 0; k=2) = 0.
    \end{align*}
\end{itemize}
\paragraph{Budget Constraint.}
Assume the budget in each round is a fraction of the total number of arms.  For concreteness, let the budget be 
\[
  B \;=\; \Big\lfloor \tfrac{1}{4}\,\rho \Big\rfloor,
\]
so that we may activate at most $\lfloor \tfrac{\rho}{4} \rfloor$ arms (out of $\rho$).  As $\rho \to \infty$, we can assume without loss of generality that $B = \tfrac{\rho}{3}$ is an integer.

The \OMLP{} simplifies to
\begin{align*}
    \max_{\mu,\,B_k} \quad 
    & \mu(1,1,1) \;+\; \mu(1,1,2)\,\bigl(1 + \epsilon\bigr) \\
    \text{subject to}\quad 
    & \tfrac12\bigl(1 - P[s=1]\bigr) \;=\; \epsilon\,\mu(1,1,1) \;+\; \mu(1,1,2),\\
    & \mu(1,1,k) \;\le\; \tfrac12\,P[s=1],\quad \forall\,k \in \{1,2\},\\
    & \mu(1,1,1) + \mu(1,1,2) \;\le\; 0.25 
\end{align*}

The optimal solution is $P^\star[s = 1] = 0.5, \mu^\star(1, 1, 1) = 0, \mu^\star(1, 1, 2) = 0.25$. Similarly, \ContextualOccupancyIndexPolicy{} would allocate budget so that
\begin{align*}
    B_1 & = 0,\\
    B_2 & = 0.5\rho.
\end{align*}

Therefore, from~\ref{eq:mathcal_X_y} and~\ref{eq:stationary_distribution_expression} we obtain for the stationary distribution $\pi$:
\begin{align}
    \pi(y) = \begin{cases}
        0 & y\in (0, \frac14)\\
        \frac12 \pi(\frac12) & y = \frac14 \\
        \frac12 \pi(1 - 2y) + \frac12 \pi(2y) & y\in (\frac14, \frac12]\\
        \frac12\pi(2y  - 1) & y\in (\frac12, 1].
    \end{cases}
\end{align}
It doesn't have a clean closed-form solution. But the stationary of proportion of active arms ($\pi(\cdot)$ can be solved numerically, as shown in Figure~\ref{fig:stationary_distribution}.
\begin{figure}
    \centering
    \includegraphics[width=1\linewidth]{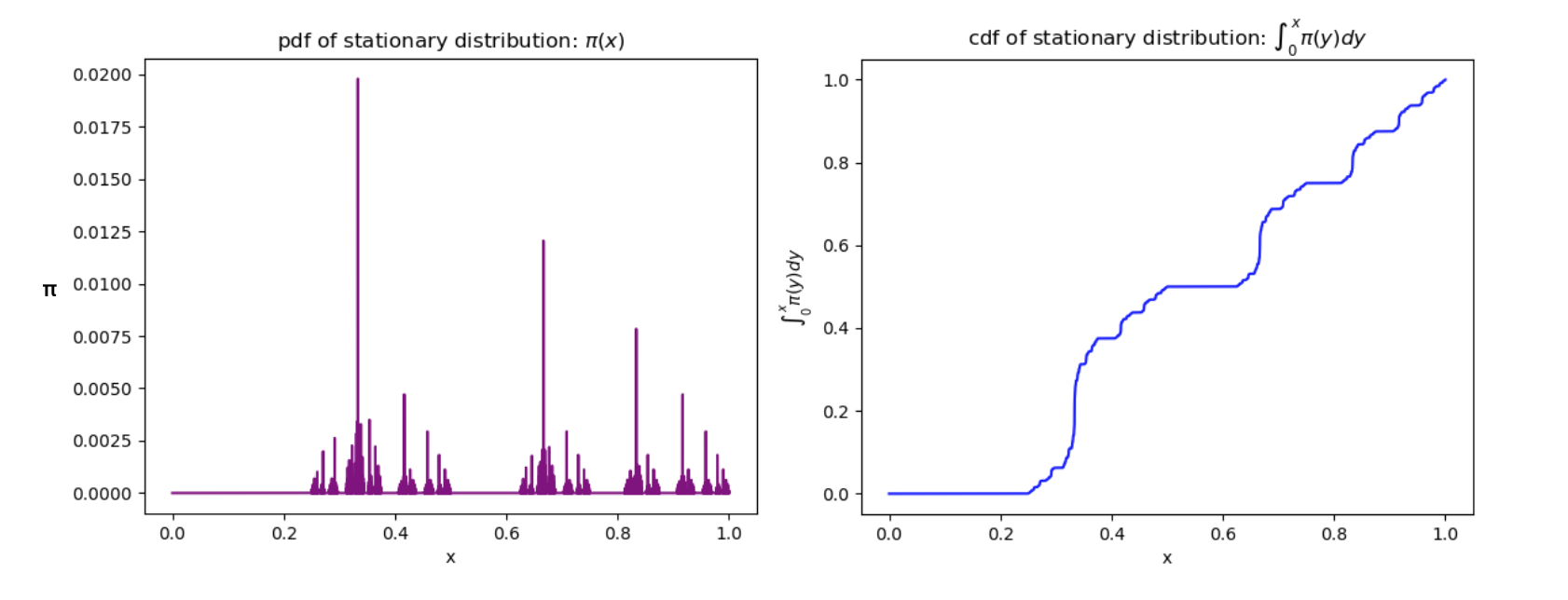}
    \caption{Calculated stationary distribution.}
    \label{fig:stationary_distribution}
\end{figure}
As shown in Figure~\ref{fig:stationary_distribution}, for nontrivial probability, the proportion of active arms is less than $0.5$---less than the required active arms to pull. The stationary reward can be calculated as
\begin{align*}
    & \Reward^\text{\ContextualOccupancyIndexPolicy{}}\\
    = \ & \int_0^1 \sum_k f_k r(k) \rho \min(\beta_k, x) \pi(x)\,dx\\
    = \  & \rho\int_0^1\frac12 (1 + \epsilon) \min(\frac12, x) \pi(x) \,dx \\
    \approx\ &  \rho (1 + \epsilon) 0.214.
\end{align*}
However, notice that the other context $k = 1$ is almost always acive (it has probability $p = \epsilon$ of transfer to inactive). Therefore, if we allocate all budget to it---almost all arms will be active all the time, and reward of $r(1) = 1$ can be accured at every pull. By back-on-the-envelope calculation, under this budget allocation (all to context $1$) the system generate (almost) exactly $\frac14$ reward. Therefore, this instance give an lowerbound of $0.214/0.25  = 0.856$ impossibility lowerbound for the LP-induced budgets.
\section{Experiment Details: Design and Implementation}

\subsection{Low/High Activeness in Synthetic Food Rescue~\CBB{}}
\label{appendix:activeness_details}
We blend two types of synthetic setups to merge and simulate different dynamics in formulating the food rescue \CBB{}:
\subsubsection{Organic}
In the organic instance, $N$ volunteers and $K$ regions are randomly positioned on a two-dimensional plane. Each volunteer and region is associated with a location and attributes—namely, volunteer activeness, region popularity, and a historical record $H_i$ (which, in turn, influences the context probabilities $f_k$). For every volunteer $i$ and region $k$, we define the pick-up rate as
$$
p_{i,k} = \exp\Biggl( \alpha\, \mathrm{pop}_k - \gamma\, d(i,k) + \beta\, \frac{|H_i|}{H_{\max}} \Biggr),
$$

where

- $\alpha$ is the parameter capturing the influence of region popularity (with $\mathrm{pop}_k$ denoting the popularity of region $k$),
- $\gamma$ is the distance sensitivity parameter (with $d(i,k)$ representing the distance between volunteer $i$ and region $k$),
- $\beta$ is the parameter reflecting volunteer activeness (with $|H_i|$ being the size of volunteer $i$'s history), and
- $H_{\max}$ is a normalization constant.

Transition dynamics are such that an active volunteer (state $s=1$) who is notified (action $a=1$) picks up the task with probability $p_{ik}$ and may then become inactive. The immediate reward for a notification is a function of region popularity and $p_{ik}$.

\subsubsection{Churner}

In addition to the organic instance, we define a churner instance to capture more challenging dynamics within the \CBB{} framework. 

The $K$ regions are partitioned into \emph{preferred} regions ($\mathcal{K}_\text{preferred} \subsetneq [K]$ and its complement. The preferred regions are designed such that, for any volunteer $i$, the pick-up probabilities $p_{ik}$ for $k \in \mathcal{K}_\text{preferred}$ are drawn uniformly around a high mean (e.g., centered at 0.95), making these regions very attractive and yielding a high probability of transitioning a volunteer to an inactive state. In contrast, for regions $k\notin \mathcal{K}_\text{preferred}$ (the ``disliked'' regions), the transition probabilities are concentrated around a low mean (e.g., centered at 0.05). Recovery probabilities $q_i$ for volunteers are generated around a prescribed mean (e.g., 0.2). Moreover, the reward structure is modified so that notifications in preferred regions yield an elevated immediate reward to reflect their allure despite the adverse long-term effect. 

\subsubsection{Blended Instance}

Finally, we construct a \emph{Blended Instance} that merges organic and churner dynamics. A fraction, referred to as abundance, $\rho_\text{Abundance} \in [0,1]$ of the $N$ volunteers is designated to follow churner dynamics, while the remaining $N - N_{\text{active}}$ volunteers follow organic dynamics. Formally, we set
$$
N_{\text{organic}} = \lfloor \rho_\text{Abundance} N \rfloor,
$$
and generate two independent instances over the same set of $K$ regions:
\begin{enumerate}[label=(\roman*)]
    \item An \emph{organic instance} with $N_{\text{organic}}$ volunteers. The transition dynamics and rewards are constructed as described in Section~\ref{appendix:activeness_details}
    \item A \emph{churner instance} with $N - N_{\text{organic}}$ volunteers, constructed as described in Section~\ref{appendix:activeness_details}.
\end{enumerate}

\subsection{Ablation Experiments on Synthetic Food Rescue Instance}
\label{appendix:experiment_details}

\begin{figure}
    \centering
    \includegraphics[width=1\linewidth,page=6]{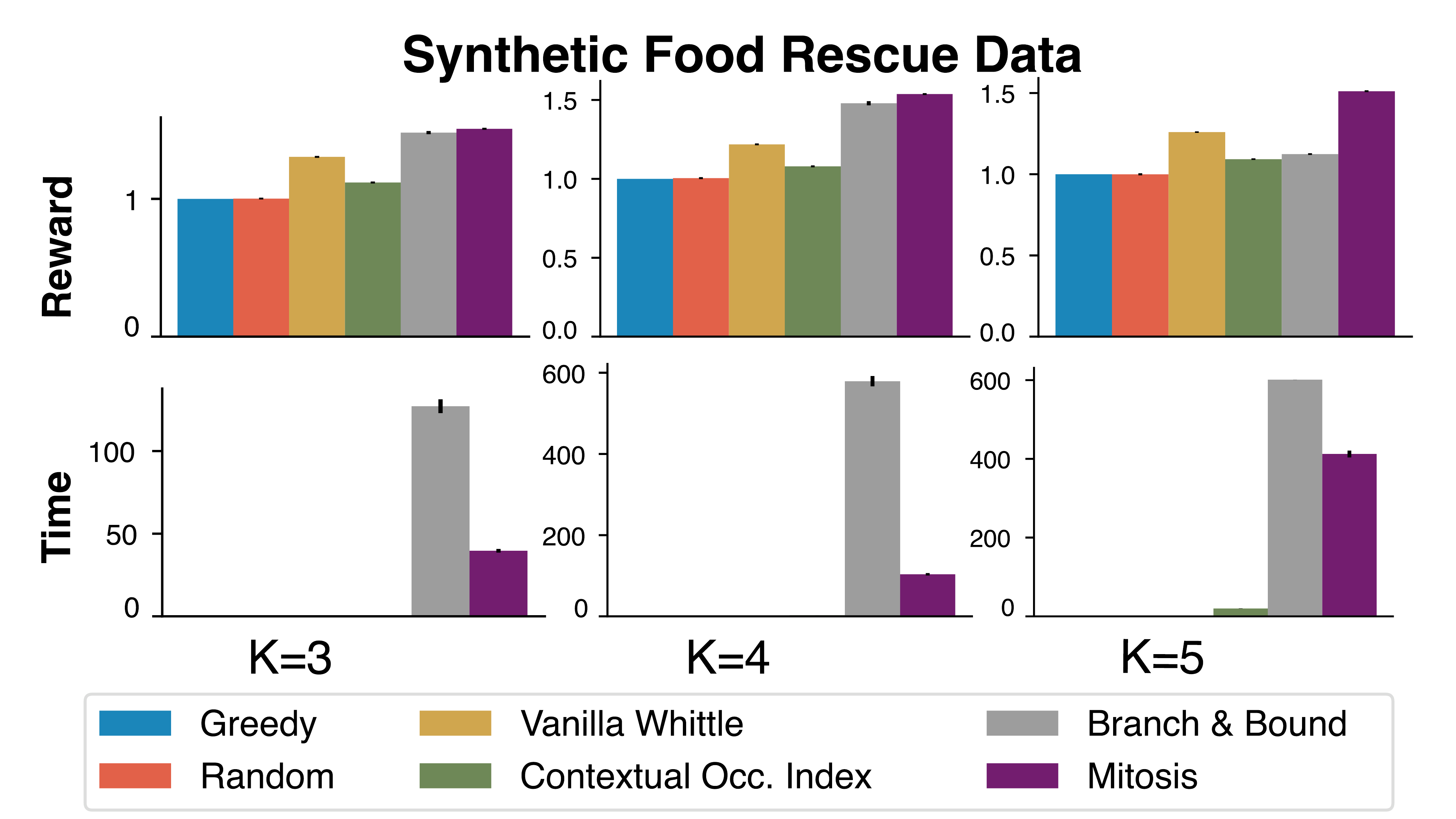}
    \caption{Ablation Experiments on Synthetic Food Rescue Experiments, Varying Number of Volunteers}
    \label{fig:ablation_N_synthetic}
\end{figure}

\begin{figure}
    \centering
    \includegraphics[width=1\linewidth,page=1]{figures/ablations_appendix.pdf}
    \caption{Ablation Experiments on Synthetic Food Rescue Experiments, Varying Number of Contexts}
    \label{fig:ablation_K_synthetic}
\end{figure}

\begin{figure}
    \centering
    \includegraphics[width=1\linewidth,page=3]{figures/ablations_appendix.pdf}
    \caption{Ablation Experiments on Synthetic Food Rescue Experiments, Varying Budget}
    \label{fig:ablation_B_synthetic}
\end{figure}

\begin{figure}
    \centering
    \includegraphics[width=1\linewidth,page=5]{figures/ablations_appendix.pdf}
    \caption{Ablation Experiments on Real Food Rescue Experiments, Varying Number of Volunteers}
    \label{fig:ablation_N_real}
\end{figure}

\begin{figure}
    \centering
    \includegraphics[width=1\linewidth,page=4]{figures/ablations_appendix.pdf}
    \caption{Ablation Experiments on Real Food Rescue Experiments, Varying Number of Contexts}
    \label{fig:ablation_K_real}
\end{figure}

\begin{figure}
    \centering
    \includegraphics[width=1\linewidth,page=2]{figures/ablations_appendix.pdf}
    \caption{Ablation Experiments on Synthetic Food Rescue Experiments, Varying Budget}
    \label{fig:ablation_B_real}
\end{figure}

Below, we summarize the ablation study results on synthetic data, where we systematically vary the number of volunteers $(N)$, the number of regions $(K)$, and the budget $(B)$.

\begin{itemize}
\item \textbf{Varying number of volunteers for $N=50, 100, 200$, fix $K=3$ regions and budget be $5\%$ number of volunteers (Figure~\ref{fig:ablation_N_synthetic}):}  
  As $N$ increases, \ZoomingBranch{} (purple) consistently leads in reward and remains much faster than \BranchAndBound{} (gray). Note that in~$N=200$ both \BranchAndBound{} and \ZoomingBranch{} reach the time limit (600s) and is terminated, but sill within the same time limit, the \ZoomingBranch{}'s solution is more than that of \BranchAndBound{}'s, demonstrating that \ZoomingBranch{} is much faster. \ContextualOccupancyIndexPolicy{} (green) still lags in reward, indicating it does not fully exploit the increased volunteer pool.

\item \textbf{Varying number of regions for $K=3, 4, 5$, fix number of volunteers $N=50$, budget $B=5$ (Figure~\ref{fig:ablation_K_synthetic}):}  
  With more regions, the search space for budget increases exponentially. \BranchAndBound{} maintains a slight reward edge but at a steep runtime cost, it times out already at $k=4$. \ZoomingBranch{} remains best and is faster compared to \BranchAndBound{}. \ContextualOccupancyIndexPolicy{} gains some benefit but continues to underperform compared to the optimal. 

\item \textbf{Varying budget $B=2, 4, 6$, fix volunteers~$N=50$, regions~$K=3$ (Figure~\ref{fig:ablation_B_synthetic}):}  
  Increasing $B$ allows more notifications, boosting \ZoomingBranch{} substantially while also helping \ContextualOccupancyIndexPolicy{} close some of the gap. Once again, \BranchAndBound{} yields top‐tier rewards but incurs much higher computation time.
\end{itemize}

\subsection{Ablation Experiments on Real Food Rescue Data} Similar as synthetic data's ablations, we systematically vary the number of volunteers $(N)$, the number of regions $(K)$, and the budget $(B)$ of \CBB{} constructed on real food rescue data. Results are shown in 
\begin{itemize}
    \item Figure~\ref{fig:ablation_N_real}: changing $N=20, 50, 100$ while maintain number of regions $K=3$, budget $B=2$.
    \item Figure~\ref{fig:ablation_K_real}: changing $K=3, 4, 5$ while maintaining number of regions $N=20$, budget $B=2$.
    \item Figure~\ref{fig:ablation_B_real}: changing $B=2, 4, 6$ while maintaining number of volunteers $N=20$, number of regions $K=3$.
\end{itemize}
As the scale of the instance increases ($N, K$ or $B$ increases), \VanillaWhittle{} performance grows worse compared to \ContextualOccupancyIndexPolicy{}, \BranchAndBound{}, \ZoomingBranch{} which they perform similarly. This shows that (i) when the scale of the problem increase, it is necessary to introduce context-aware policies to reach optimal performance (ii) in application, \ContextualOccupancyIndexPolicy{} is sufficient for near-optimal performance. \ZoomingBranch{} guarantees optimality and is significantly faster than \BranchAndBound{}.

\section{No-Regret Guarantee for the Mitosis Algorithm}
\label{appendix:mitosis}

In the MAB framework each arm represents a candidate budget allocation \(\vec{B}\) from the feasible set
\[
\mathcal{B}_0 \triangleq \Bigl\{ \vec{B} \in \mathbb{N}^K : \sum_{k=1}^K B_k \le B \Bigr\}.
\]
Pulling an arm \(\vec{B}\) corresponds to calling the fast oracle \(\OracleSmall(\vec{B})\) (with \(\texttt{epoch}=1\)) which returns a noisy estimate of the reward \(\mu(\vec{B})\). Thus, by running a Multi-Armed Bandit (MAB) algorithm over the arms \(\vec{B} \in \mathcal{B}_0\) we aim to select the arm with the highest expected reward without having to estimate \(\mu(\vec{B})\) for every \(\vec{B}\).

\begin{definition*}[Reward Regret]
Let
\[
\mu^\star \triangleq \max_{\vec{B}\in\mathcal{B}_0}\mu(\vec{B})
\]
and denote by \(\vec{B}_t\) the budget allocation (arm) chosen at time \(t\). Then the instantaneous regret at time \(t\) is
\[
\Delta_t \triangleq \mu^\star - \mu(\vec{B}_t),
\]
and the cumulative (reward) regret over a time horizon \(T\) is defined as
\[
R(T) \triangleq \sum_{t=1}^T \Delta_t = \sum_{t=1}^T \Bigl(\mu^\star - \mu(\vec{B}_t)\Bigr).
\]
\end{definition*}

The goal is to design an algorithm whose cumulative regret grows sublinearly in \(T\); that is, \(\frac{R(T)}{T}\to 0\) as \(T\to\infty\). In our setting, the optimal budget allocation \(\vec{B}^\star\) (with \(\mu(\vec{B}^\star)=\mu^\star\)) will be identified as \(T\) increases.

\thmMitosisNoRegret*

\begin{proof}
The proof is built on the classical UCB1 analysis of \cite{UCB1_2022}. In the \ZoomingBranch{} Algorithm (Algorithm~\ref{alg:Mitosis}), each arm $\vec{B}$ is initialized with its upperbound $\LP(\vec{B})$. The algorithm maintains two types of arms:
\begin{itemize}
    \item \textbf{Unpromising arms:} Arms that are encapsulated in \TreeArm{}, who has not yet been pulled. Their index is given by $\LP(\vec{B})$.
    \item \textbf{Candidate arms:} Arms that have been pulled at least once. For these, the UCB index at time $t$ is defined as
        $$
            I_t(\vec{B}) = \hat{\mu}_t(\vec{B}) + c\sqrt{\frac{\log t}{N_t(\vec{B})}},
        $$
    where $\hat{\mu}_t(\vec{B})$ is the empirical mean, $N_t(\vec{B})$ is the number of pulls, and $c>0$ is a constant.
\end{itemize}

The algorithm runs for $T$ rounds; by the end, let $\mathcal{A}$ denote the final set of candidate arms. We consider two cases based on the location of the optimal arm $\vec{B}^\star$.
\medskip

\noindent\emph{Case 1:} $\vec{B}^\star \in \mathcal{A}$.  
In this case, the optimal arm has been pulled at least once. Therefore, the candidate arms $\mathcal A$ form a sub-MAB instance where we can directly apply UCB1's regret bound on; the arms in \TreeArm{} are not pulled anyway, so they do not contribute to regret. Standard UCB1 analysis (using a peeling argument and concentration inequalities, see~\cite{UCB1_2022}) shows that the expected pulls on any suboptimal arms, denoted by $N_t(\vec B)$, satisfy
\begin{equation}
    \label{eq:expected_pulls_UCB}
    \mathbb{E}[N_t(\vec{B})] = \mathcal O\Biggl(\frac{\log t}{\Delta(\vec{B})^2}\Biggr).
\end{equation}
Thus, the regret incurred by arms in $\mathcal{A}$ is
$$
R(T) = \sum_{\vec{B}\in \mathcal{A}} \mathbb{E}[N_T(\vec{B})]\,\Delta(\vec{B}) = \mathcal O\Biggl( \sum_{\vec{B}\in \mathcal{A}} \frac{\log T}{\Delta(\vec{B})} \Biggr).
$$
\medskip

\noindent\emph{Case 2:} $\vec{B}^\star \notin \mathcal{A}$. We prove that this case will never happen by contradiction. In other words, the optimal arm that represents the optimal budget solution for \CBB{} will also be budded out by the \TreeArm{} in \ZoomingBranch{} algorithm, as the time horizon is sufficiently large.

Let $\vec B^{2nd} := \argmax_{a\in \mathcal A} \mu(a)$ be the arm with the highest mean in the candidate arms. Since the number of pulls for all other suboptimal arms satisfies~(\ref{eq:expected_pulls_UCB}), the number of pulls for $\vec B^{2nd}$ grows linearly with $t$:
$$
    \mathbb E[N_T(\vec B^{2nd})] = T - \mathcal O(\log T).
$$
Since $\vec B^\star\in \TreeArm{}$, by the end of the algorithm, the \TreeArm{}'s index~($\LP{}(\TreeArm{})$)is smaller than the UCB index of $\vec B^{2nd}$. Since $\vec B^\star \in \TreeArm{}$, we have
$$
\LP{}(\TreeArm{}) \ge \LP{}(\vec B^\star) \ge \mu(\vec B^\star).
$$
then, the event that the suboptimal arm~$\vec B^{2nd}$'s UCB index be strictly greater than the optimal arm's mean $\mu(\vec B^\star)$: for $\mu(\vec B^\star) > \hat \mu_T(\vec B^{2nd})$,
\begin{align}
    & \Prx{I_T(\vec B^{2nd}) \ge \mu(\vec B^\star)} \\
    =\ & \Prx{ \hat \mu_T(\vec B^{2nd}) + c\sqrt{\frac{\log (\mathcal O(T))}{N_T(\vec{B}^{2nd})}} \ge \mu(\vec B^\star)}\\
    \le \ & \exp\left\{-\frac{\mathcal O(T)(\mu(\vec B^{2nd})-\mu(\vec B^\star))^2}{c'}\right\}\\
\end{align}
 The probability declines exponentially. Since the probability of the event in Case~2 decays exponentially with $T$, its contribution to the overall expected regret is negligible compared to the regret in Case~1. In other words, with probability tending to one as $T\to\infty$, the optimal arm $\vec{B}^\star$ is eventually pulled and becomes a candidate arm. Therefore, the overall expected regret of the \ZoomingBranch{} algorithm is dominated by the regret incurred in Case~1, and we have
$$
R(T) = \mathcal O\Biggl( \sum_{\vec{B}\in \mathcal{A}} \frac{\log T}{\Delta(\vec{B})} \Biggr).
$$

Overall the regret of \ZoomingBranch{} is controlled by the classical UCB1 guarantee, up to a constant factor, and hence the algorithm achieves near-optimal performance. This completes the proof.

\end{proof}

\end{document}